\documentclass[12pt,a4paper]{article}
\usepackage{fullpage}
\usepackage[authoryear,round]{natbib}
\usepackage{authblk}

\usepackage{graphicx}
\usepackage{epstopdf}
\usepackage{subfigure} 
\usepackage{amsmath}
\usepackage{amsfonts}
\usepackage{amssymb}
\usepackage{amsthm}
\usepackage{mathrsfs}
\usepackage{algorithm}
\usepackage{algorithmic}
\usepackage{caption}
\usepackage{dsfont}

\newtheorem{theorem}{Theorem}

\newtheorem{lemma}[theorem]{Lemma}

\theoremstyle{remark}
\newtheorem{remark}{Remark}
\newtheorem*{remark*}{Remark}

\DeclareMathOperator{\sign}{\mathrm{sign}}

\DeclareMathOperator*{\argmin}{\arg\min}
\DeclareMathOperator*{\argmax}{\arg\max}

\newcommand{\bE}{\mathbb{E}}

\newcommand{\ve}{\boldsymbol{e}}

\newcommand{\vZero}{\boldsymbol{0}}
\newcommand{\cA}{\mathcal{A}}

\newcommand{\cI}{\mathcal{I}}
\newcommand{\cS}{\mathcal{S}}

\newcommand{\cT}{\mathcal{T}}
\newcommand{\cC}{\mathcal{C}}

\newcommand{\dOne}{\mathds{1}}

%opening
\title{Bandit-Based Task Assignment for\\
Heterogeneous Crowdsourcing}
\author[1]{Hao Zhang}
\author[1]{Yao Ma}
\author[2]{Masashi Sugiyama}
\affil[1]{\normalsize Department of Computer Science, Tokyo Institute of Technology, Japan}
\affil[2]{\normalsize Department of Complexity Science and Engineering, The University of Tokyo, Japan}

\begin{document}

\maketitle

\begin{abstract}
We consider a task assignment problem in crowdsourcing, which is aimed at collecting as many reliable labels as possible within a limited budget. A challenge in this scenario is how to cope with the diversity of tasks and the task-dependent reliability of workers, e.g., a worker may be good at recognizing the name of sports teams, but not be familiar with cosmetics brands. We refer to this practical setting as \emph{heterogeneous} crowdsourcing. In this paper, we propose a contextual bandit formulation for task assignment in heterogeneous crowdsourcing, which is able to deal with the exploration-exploitation trade-off in worker selection. We also theoretically investigate the regret bounds for the proposed method, and demonstrate its practical usefulness experimentally.
\end{abstract}

\section{Introduction}
\label{sec:intro}

The quantity and quality of labeled data significantly affect the performance of machine learning algorithms. However, collecting reliable labels from experts is usually expensive and time-consuming. The recent emergence of \emph{crowdsourcing} services such as \emph{Amazon Mechanical Turk}\footnote{https://www.mturk.com/mturk/} (MTurk) enables us to cheaply collect huge quantities of labeled data from crowds of workers for many machine learning tasks, e.g., natural language processing \citep{snow08} and computer vision \citep{welinder10a}. In a crowdsourcing system, a requester asks workers to complete a set of labeling tasks by paying them a tiny amount of money for each label.

The primary interest of crowdsourcing research has been how to cope with different reliability of workers and aggregate the collected noisy labels \citep{dawid79,smyth94,ipeirotis10,raykar10,welinder10b,yan10,kajino12,liu12,zhou12}. Usually, \emph{weighted voting} mechanism is implicitly or explicitly utilized for label aggregation, with workers' reliability as weights. Many existing methods use \emph{Expectation-Maximization} (EM) \citep{dempster77} on \emph{static} datasets of the collected labels to jointly estimate  workers' reliability and true labels. However, how to \emph{adaptively} collect these labels is often neglected. Since the total budget for a requester to pay the workers is usually limited, it is necessary to consider how to intelligently use the budget to assign tasks to workers.

This leads another important line of crowdsourcing research, which is called the task routing or task assignment problem. There are two classes of task assignment methods: \emph{push} and \emph{pull}. While in \emph{pull} methods the system takes a passive role and only sets up the environment for workers to find tasks themselves, in \emph{push} methods the system takes complete control over which tasks are assigned to whom \citep{law11}. In this paper we focus on \emph{push} methods, and refer to them as task assignment methods henceforth. Most of the existing task assignment methods run in an online mode, simultaneously learning workers' reliability and collecting labels \citep{donmez09,chen13,ertekin14}. To deal with the exploration (i.e. learning which workers are reliable) and exploitation (i.e. selecting the workers considered to be reliable) trade-off in worker selection, \emph{IEThresh} \citep{donmez09} and \emph{CrowdSense} \citep{ertekin14} dynamically sample worker subsets according to workers' labeling performances. However, this is not enough in recent heterogeneous crowdsourcing where a worker may be reliable at only a subset of tasks with a certain type. For example, the workers considered to be good at previous tasks in IEThresh and CrowdSense may be bad at next ones. Therefore, it is more reasonable to model task-dependent reliability for workers in heterogeneous crowdsourcing. Another issue in IEThresh and CrowdSense is that the budget is not pre-fixed. That is, the requester will not know the total budget until the whole task assignment process ends. This makes those two methods not so practical for crowdsourcing. OptKG \citep{chen13} runs within a pre-fixed budget and formulates the task assignment problem as a Markov decision process (MDP). However, it is difficult to give theoretical guarantees for OptKG when heterogeneous workers are involved.

In recent crowdsourcing markets, as the heterogeneity of tasks is increasing, many researchers started to focus on heterogeneous crowdsourcing. \citet{goel14} studied the problem of incentive-compatible mechanism design for heterogeneous markets. The goal is to properly price the tasks for worker trustfulness and maximize the requester utility with the financial constrain. \citet{ho12} and \citet{ho13} studied the problem of task assignment in heterogeneous crowdsourcing. However, it is another variant of problem setting, where workers arrive online and the requester must assign a task (or sequence of tasks) to each new worker as she arrives \citep{slivkins14}. While in our problem setting, the requester completely controls which task to pick and which worker to select at each step.

From a technical perspective, the most similar problem setting to ours is that of OptKG, where we can determine a task-worker pair at each step. For the purpose of extensive comparison, we also include two heuristic methods IEThresh and CrowdSense as well as OptKG in our experiments. These three task assignment methods are further detailed in Section \ref{sec:exist}.

In this paper, we propose a contextual bandit formulation for task assignment in heterogeneous crowdsourcing. Our method models task-dependent reliability for workers by using \emph{weight}, which depends on the \emph{context} of a certain task. Here, \emph{context} can be interpreted as the type or required skill of a task. For label aggregation, we adopt \emph{weighted voting}, which is a common solution used for aggregating noisy labels in crowdsourcing. Our method consists of two phases: the \emph{pure exploration} phase and the \emph{adaptive assignment} phase. In the pure exploration phase, we explore workers' reliability in a batch mode and initialize their weights as the input of the adaptive assignment phase. On the other hand, the adaptive assignment phase includes a bandit-based strategy, where we sequentially select a worker for a given labeling task with the help of the \emph{exponential weighting scheme} \citep{cesa-bianchi06,arora12}, which is a standard tool for bandit problems. The whole method runs within a limited budget. Moreover, we also investigate the regret bounds of our strategy theoretically, and demonstrate its practical usefulness experimentally.

The rest of this paper is organized as follows. In Section \ref{sec:method}, we describe our proposed bandit-based task assignment method for heterogeneous crowdsourcing. Then we theoretically investigate its regret bounds in Section \ref{sec:theory}. For the purpose of comparison, we look into the details of the existing task assignment methods for crowdsourcing in Section \ref{sec:exist} and experimentally evaluate them together with the proposed method in Section \ref{sec:experiment}. Finally, we present our conclusions in Section \ref{sec:conclusion}.

\section{Bandit-Based Task Assignment}
\label{sec:method}

In this section, we describe our proposed bandit-based task assignment (BBTA) method.

\subsection{Problem Formulation}
Suppose we have $N$ unlabeled tasks with indices $\cI=\{1,\ldots,N\}$, each of which is characterized by a context $s$ from a given context set $\cS$, where $|\cS|=S$. Let $\{y_i^*\}_{i=1}^{N}$ be the unknown true labels of tasks, where $y_i^*\in \{-1,1\}$. Each time given a task, we ask one worker from a pool of $K$ workers for a (possibly noisy) label, consuming one unit of the total budget $T$. Our goal is to find suitable task-worker assignment to collect as many reliable labels as possible within the limited budget $T$. Finally, we aggregate the collected labels to estimate the true labels $\{y_i^*\}_{i=1}^{N}$.

Let $y_{i,j}$ be the individual label of task $i$ (with context $s_i$) given by worker $j$. If the label is missing, we set $y_{i,j}=0$. For simplicity, we omit the subscript $i$ of context $s_i$, and refer to the context of the current task as $s$. We denote the weight of worker $j$ for context $s$ by $w^s_{j}$ $(>0)$, corresponding to the task-dependent reliability of worker $j$. Note that $w^s_{j}$ is what we learn dynamically in the method. Then an estimate of the true label $y_i^*$ is calculated by the weighted voting mechanism as 
\begin{align}
\label{eq:weighted}
\widehat{y}_i=\sign(\overline{y}_i),\text{ where }\overline{y}_i=\frac{\sum_{j=1}^K w^s_{j}y_{i,j}}{\sum_{j'=1}^K w^s_{j'}}.
\end{align}

Our proposed method consists of the \emph{pure exploration} phase and the \emph{adaptive assignment} phase. The pseudo code is given in Algorithm \ref{alg:BBTA}, and the details are explained below.

\begin{algorithm}
\caption{Bandit-Based Task Assignment (BBTA)}
\label{alg:BBTA}
\begin{algorithmic}[1]
\STATE {\bfseries Input:} 
\STATE ~~~$N$: The number of tasks, each of which has a context $s$ from $\cS$
\STATE ~~~$K$: The number of workers 
\STATE ~~~$T$: The total budget 
\STATE ~~~$N'$: The number of tasks for each context in the pure exploration phase
\STATE {\bfseries Initialization:} 
\STATE ~~~$y_{i,j}=0$ for $i=1,\ldots,N$ and $j=1,\ldots,K$.
\STATE ~~~$\left|\overline{y}_{i}\right|=0$ for $i=1,\ldots,N$.
\STATE ~~~$t^s=0$ for $s\in\cS$.
\STATE \% \textsf{Pure Exploration Phase}
\STATE Pick $N^{\prime}$ tasks for each of $S$ distinct contexts.
\STATE Collect labels from \emph{all} workers for these tasks and use Eqn.~\ref{eq:majority} for label aggregation.
\STATE Calculate cumulative losses for $j=1,\ldots,K$ and $s\in\cS$:
\[
L^s_{j,0}=\sum_{i\in\cI^s_1}\dOne_{y_{i,j}\neq \widehat{y}_i}.
\]
\STATE Set the budget as $T_2=T-SKN'$ and the index set of available tasks as $\cI_2=\cI\setminus\bigcup_{s\in\cS}\cI_1^s$ for the adaptive assignment phase.
\STATE \% \textsf{Adaptive Assignment Phase}
\FOR{$t=1$ {\bfseries to} $T_2$}
\STATE Pick the task with the lowest confidence score
$\displaystyle
i_t=\argmin_{i\in \cI_2} \left|\overline{y}_i\right|$.
\STATE Observe its context $s$, and update $t^{s}\leftarrow t^{s}+1$.
\STATE Set $\eta^{s}_{t^{s}}=\sqrt{\frac{\ln K}{t^{s}K}}$, and calculate the weights for $j=1,\ldots,K$:
\[
w^s_{j,t^s}=\exp(-\eta^s_{t^s}L^s_{j,t^s-1}).
\]
\STATE Draw the worker $j_t$ from the distribution $(p_{1,t},\ldots,p_{K,t})$, where 
\[
p_{j,t}=\frac{w^s_{j,t^s}}{\sum_{j'=1}^{K}w^s_{j',t^s}}.
\]
\STATE Obtain label $y_{i_t,j_t}$ and calculate $\widehat{y}_{i_t}$ by the weighted voting mechanism (Eqn.~\ref{eq:weighted}).
\STATE Set 
$l_{j_t,t}=
\dOne_{\widehat{y}_{i_t}\neq y_{i_t,j_t}}$, and for $j=1,\ldots,K$ calculate $\widetilde{l}_{j,t}=\frac{l_{j,t}}{p_{j,t}}\dOne_{j=j_t}$ and update $L^s_{j,t^s}=L^s_{j,t^s-1}+\widetilde{l}_{j,t}$.
\STATE Update confidence scores $\left|\overline{y}_i\right|$ of all tasks with context $s$ by using Eqn.~\ref{eq:weighted}.
\IF{task $i_t$ is already labeled by all workers}
\STATE $\cI_2\leftarrow\cI_2\setminus\{i_t\}$.
\ENDIF
\ENDFOR
\STATE {\bfseries Output: }$\widehat{y}_i=\sign(\overline{y}_i)$ for $i=1,\ldots,N$
\end{algorithmic}
\end{algorithm}

\subsection{Pure Exploration Phase}
Pure exploration performs in a batch mode, and the purpose is to know which workers are reliable at which labeling tasks. To this end, we pick $N'$ tasks for each of $S$ distinct contexts ($SN' \ll N$) and let \emph{all} $K$ workers label them (Line 11-12 in Algorithm \ref{alg:BBTA}). We denote the index set of $N'$ tasks with context $s$ as $\cI^s_1$ in this phase.

Since we have no prior knowledge of workers' reliability in this phase, we give all of them the same weight when aggregating their labels (equivalent to majority voting):
\begin{align}
\label{eq:majority}
\widehat{y}_i=\sign(\overline{y}_i),\text{ where }\overline{y}_i=\frac{1}{K}\sum_{j=1}^K y_{i,j}.
\end{align}
In the standard crowdsourcing scenario, all of the true labels are usually unknown. As in many other crowdsourcing methods, we have the prior belief that most workers perform reasonably well. To evaluate an individual label $y_{i,j}$, using the weighted vote $\widehat{y}_i$ is a common solution \citep{donmez09,ertekin14}. We denote the cumulative loss by $L^s_{j,0}$ and initialize it for the next phase as
\begin{align*}
L^s_{j,0}=\sum_{i\in\cI^s_1}\dOne_{y_{i,j}\neq \widehat{y}_i},\text{ for }j=1,\ldots,K\text{ and }s\in\cS,
\end{align*}
where $\dOne_{\pi}$ denotes the indicator function that outputs 1 if condition $\pi$ holds and 0 otherwise. This means that when a worker gives an individual label $y_{i,j}$ inconsistent or consistent with the weighted vote $\widehat{y}_i$, this worker suffers a loss 1 or 0. It is easy to see that cumulative losses correspond to workers' reliability. They are used for calculating workers' weights in the next phase. The budget for the next phase is $T_2=T-T_1$, where $T_1=SKN'<T$ is the budget consumed in this phase.

\subsection{Adaptive Assignment Phase}
In the adaptive assignment phase, task-worker assignment is determined for the remaining $N-SN'$ tasks in an online mode within the remaining budget $T_2$. At each step $t$ of this phase, to determine a task-worker pair, we need to further consider which task to pick and which worker to select for this task.

According to the weighted voting mechanism (Eqn.~\ref{eq:weighted}), the magnitude $|\overline{y}_i|$ $(\in[0,1])$ corresponds to the \emph{confidence score} of $\widehat{y}_i$. For task $i$, the confidence score $|\overline{y}_i|$ will be $1$ if and only if we have collected labels from \emph{all} workers and \emph{all} of them are consistent. On the other hand, when the sum of (normalized) weights for positive labels is equal to that for negative labels, or we have no labels for task $i$, the confidence score $|\overline{y}_i|$ is $0$. That is, we are not confident in the aggregated label (i.e. the confidence score is low) because the collected labels are significantly inconsistent, or insufficient, or both. If the confidence score of a task is lower than those of others, collecting more labels for this task is a reasonable solution. Thus we pick task $i_t$ with the lowest confidence score as the next one to label (Line 17 in Algorithm \ref{alg:BBTA}):
\begin{align*}
i_t=\argmin_{i\in\cI_2} \left|\overline{y}_i\right|,
\end{align*}
where $\cI_2$ is the index set of current available tasks in this phase. This idea is similar to \emph{uncertainty sampling} \citep{lewis94} in active learning.

Given the picked task, selecting a worker reliable at this task is always favored. On the other hand, workers' reliability is what we are dynamically learning in the method. There exists a trade-off between exploration (i.e. learning which worker is reliable) and exploitation (i.e. selecting the worker considered to be reliable) in worker selection. To address this trade-off, we formulate our task assignment problem as a \emph{multi-armed bandit} problem, more specifically, a \emph{contextual bandit} problem \citep{bubeck12}.

Multi-armed bandit problems \citep{auer02a,auer02b} are basic examples of sequential decision making with \emph{limited feedback} and can naturally handle the exploration-exploitation trade-off. At each step, we allocate one unit of budget to one of a set of actions and obtain some observable reward (loss) given by the environment. The goal is to maximize (minimize) the cumulative reward (loss) in the whole allocation sequence. To achieve this goal, we must balance the exploitation of actions that are good in the past and the exploration of actions that may be better in the future. A practical extension of the basic bandit problem is the contextual bandit problem, where each step is marked by a context from a given set. Then the interest is finding good mappings from contexts to actions rather than identifying good actions.

In most contextual bandits, the context (side information) is provided as a feature vector which influences the reward/loss at each step, whereas in the heterogeneous crowdsourcing setting, we consider the simple situation of contextual bandits, where the context is marked by some discrete value, corresponding to the task type. Then our interest is to find a good mapping from task types to appropriate workers rather than the best worker for all tasks. Note that the task types are observable and provided by the environment.

The adaptive assignment phase includes our strategy of worker selection, where selecting a worker corresponds to taking an action. The objective is to collect reliable labels via the strategy of adaptively selecting workers from the worker pool.

We further assume the scheme that each worker adopts to assign labels as a black box. Then the labeled status of all tasks could abruptly change over time. This means the exact sequence of tasks in the adaptive assignment phase is unpredictable, given that we calculate confidence scores for all tasks based on their labeled status. Thus we consider the task sequence as the external information in this phase.

Given the task with context $s$, we calculate the weights ($s$-dependent) of workers as follows (Line 19 in Algorithm \ref{alg:BBTA}):
\begin{align*}
w^s_{j,t^s}=\exp(-\eta^s_{t^s}L^s_{j,t^s-1}),\text{ for }j=1,\ldots,K,
\end{align*}
where $t^s$ is the appearance count of context $s$ and $\eta^s_{t^s}$ is the learning rate related to $t^s$. This calculation of weights by using cumulative losses is due to the \emph{exponential weighting scheme} \citep{cesa-bianchi06,arora12}, which is a standard tool for sequential decision making under adversarial assumptions. Following the exponential weighting scheme, we then select a worker $j_t$ from the discrete probability distribution on workers with each probability $p_{j,t}$ proportional to the weight $w^s_{j,t^s}$ (Line 20 in Algorithm \ref{alg:BBTA}). Since workers' cumulative losses do not greatly vary at the beginning but gradually differ from each other through the whole adaptive assignment phase, this randomized strategy can balance the exploration-exploitation trade-off in worker selection, by exploring more workers at earlier steps and doing more exploitation at later steps.

Then we ask worker $j_t$ for an individual label $y_{i_t,j_t}$ and calculate the weighted vote $\widehat{y}_{i_t}$ by using Eqn.~\ref{eq:weighted} (Line 21 in Algorithm \ref{alg:BBTA}). With the weighted vote $\widehat{y}_{i_t}$, we obtain the loss of the selected worker $j_t$: $l_{j_t,t}=\dOne_{\widehat{y}_{i_t}\neq y_{i_t,j_t}}$ (Line 22 in Algorithm \ref{alg:BBTA}). Recall that we do not have any assumption on how each worker gives a label. Then we have no stochastic assumption on the generation of losses. Thus we can consider this problem as an adversarial bandit problem. Note that we can only observe the loss of the selected worker $j_t$, and for other workers, the losses are unobserved. This is called \emph{limited feedback} in bandit problems. Here, we decided to give an unbiased estimate of loss for any worker from the above distribution: 
\begin{align*}
\widetilde{l}_{j,t}=\frac{l_{j,t}}{p_{j,t}}\dOne_{j=j_t}.
\end{align*}
It is easy to see that the expectation of $\widetilde{l}_{j,t}$ with respect to the selection of worker $j_t$ is exactly the loss $l_{j,t}$. Finally, we update the cumulative losses and the confidence scores of tasks with the same context as the current one (Line 22-23 in Algorithm \ref{alg:BBTA}).

The above assignment step is repeated $T_2$ times until the budget is used up.

\section{Theoretical Analysis}
\label{sec:theory}

In this section, we theoretically analyze the proposed bandit-based task assignment (BBTA) method.

The behavior of a bandit strategy is studied by means of \emph{regret} analysis. Usually, the performance of a bandit strategy is compared with that of the optimal one, to show the ``regret'' for not following the optimal strategy. In our task assignment problem, we use the notion of regret to investigate how well the proposed strategy can select better workers from the whole worker pool, by comparing our strategy with the optimal one.

The reason why we use regret as the evaluation measure is that the whole task assignment process is working in an online mode. From the perspective of a requester, the objective is to maximize the average accuracy of the estimated true labels with the constraint of the budget. Ideally, this is possible when we have complete knowledge of the whole process, e.g. the reliability of each worker and the budget for each context. However, in the setting of task assignment, since we can not know beforehand any information about the worker behaviors and the coming contexts in the future, it is not meaningful to try to maximize the average accuracy. Instead, the notion of regret can be used as an evaluation measure for the strategy of worker selection in the proposed method, which evaluates a relative performance loss compared with the optimal strategy. As a general objective for online problems \citep{hazan11, shwartz12}, minimizing the regret is a common and reasonable approach to guaranteeing the performance.

Specifically, we define the regret by
\begin{align*}
\overline{R}_{T}=\max_{g: \cS\rightarrow\{1,\ldots,K\}}\bE\left[\sum_{t=1}^{T_2}(l_{j_t,t}-l_{g(s),t})\right],
\end{align*}
where $g: \cS\rightarrow\{1,\ldots,K\}$ is the mapping from contexts to workers, $s$ is the context of the task at step $t$, and $T_2=T-SKN'$ is the budget for the adaptive assignment phase. The regret we defined is the difference of losses between our strategy and the expected optimal one. Note that the optimal strategy here is assumed to be obtained in hindsight
under the same sequence of contexts as that of our strategy. The expectation is taken with respect to both the randomness of loss generation and worker selection $j_{t}\sim p_{t}$, where $p_{t}$ is the current worker distribution at step $t$ of our strategy. Since we consider the loss generation as \emph{oblivious} (i.e. the loss $l_{j_t,t}$ only depends on the current step $t$), we do not take the expectation over all sample paths.

We further denote $\cT_2^s$ as the set of indices for steps with context $s$, and $T_2^s=\left|\cT_2^s\right|$ as the total appearance count of context $s$ in the adaptive assignment phase. The following theorem shows the regret bound of BBTA.
\begin{theorem}
\label{thm}
The regret of BBTA with $\eta^{s}_{t^{s}}=\sqrt{\frac{\ln K}{t^{s}K}}$ is bounded as
\begin{align*}
\overline{R}_T\leq 2\sqrt{(T-SKN')SK\ln K}+\frac{S{N'}^2}{4}\sqrt{\frac{\ln K}{K}}+SN'.
\end{align*}
\end{theorem}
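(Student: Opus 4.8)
The plan is to exploit the structure of BBTA's adaptive phase: because the weights $w^s_{j,t^s}$, cumulative losses $L^s_{j,t^s}$ and learning rates $\eta^s_{t^s}$ for distinct contexts never interact, the adaptive phase is literally $S$ independent copies of an EXP3-type exponential-weighting bandit, one copy being invoked each time a task of context $s$ is picked. Accordingly the first step is to decompose the regret by context. Since the comparator $g\colon\cS\to\{1,\dots,K\}$ may be chosen coordinate-wise,
\[
\overline R_T=\sum_{s\in\cS}\overline R^s,\qquad \overline R^s:=\max_{j\in\{1,\dots,K\}}\bE\Big[\sum_{t\in\cT_2^s}\big(l_{j_t,t}-l_{j,t}\big)\Big],
\]
the regret of the per-context copy over the $T_2^s=|\cT_2^s|$ rounds in which context $s$ occurs.

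Next I would bound each $\overline R^s$ by the standard potential argument for exponential weighting, adapted to the two non-standard features here: the time-varying learning rate $\eta^s_{t^s}=\sqrt{\ln K/(t^sK)}$ and the warm start inherited from the pure-exploration phase, namely the initial cumulative losses $L^s_{j,0}=\sum_{i\in\cI^s_1}\dOne_{y_{i,j}\ne\widehat y_i}\in[0,N']$. Index the occurrences of context $s$ in the adaptive phase by $m=1,\dots,\tau$ ($\tau=T_2^s$), and set $\Phi_m(\eta)=\tfrac1\eta\ln\big(\tfrac1K\sum_{j=1}^K e^{-\eta L^s_{j,m}}\big)$, which is non-decreasing in $\eta$ and satisfies $\Phi_0(\eta^s_1)\le 0$ (as $L^s_{j,0}\ge 0$) and $\Phi_\tau(\eta^s_\tau)\ge -L^s_{j,\tau}-\ln K/\eta^s_\tau$ for any fixed $j$. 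Telescoping $\Phi_\tau(\eta^s_\tau)-\Phi_0(\eta^s_1)$ and using the monotonicity of $\Phi_m$ in $\eta$ to absorb each learning-rate decrease at no cost, the per-round increment is $\Phi_m(\eta^s_m)-\Phi_{m-1}(\eta^s_m)=\tfrac1{\eta^s_m}\ln\sum_j p_{j,m}e^{-\eta^s_m\widetilde l_{j,m}}$, which $e^{-x}\le 1-x+x^2/2$ and $\ln(1+z)\le z$ bound by $-l_{j_m,m}+\tfrac{\eta^s_m}2\sum_j p_{j,m}\widetilde l_{j,m}^2$. Taking expectations over the worker draws, using unbiasedness of $\widetilde l_{j,m}$ and $\bE\big[\sum_j p_{j,m}\widetilde l_{j,m}^2\big]=\sum_j l_{j,m}^2\le K$, and inserting $\eta^s_m=\sqrt{\ln K/(mK)}$ with $\sum_{m\le\tau}m^{-1/2}\le 2\sqrt\tau$, I obtain $\overline R^s\le \ln K/\eta^s_\tau+\tfrac K2\sum_{m\le\tau}\eta^s_m+(\text{warm-start terms})\le 2\sqrt{T_2^sK\ln K}+(\text{warm-start terms})$. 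The warm-start bookkeeping — most transparently done by regarding the $N'$ exploration tasks of context $s$ as $N'$ extra full-feedback rounds that land the copy at its actual step-$1$ weights, and charging their second-order contributions with learning rate as large as $\eta^s_1=\sqrt{\ln K/K}$ together with the comparator's raw initial loss $L^s_{j,0}\le N'$ — contributes the $\tfrac{(N')^2}4\sqrt{\ln K/K}$ and $N'$ terms.

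Finally I would sum over contexts. Since $\sum_{s\in\cS}T_2^s=T_2=T-SKN'$, Cauchy–Schwarz gives $\sum_{s}\sqrt{T_2^s}\le\sqrt{S\sum_{s}T_2^s}=\sqrt{S(T-SKN')}$, so $\sum_s 2\sqrt{T_2^sK\ln K}\le 2\sqrt{(T-SKN')SK\ln K}$; adding the $S$ copies of the lower-order terms yields exactly the claimed bound.

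The step I expect to be the main obstacle is the per-context analysis: carrying the potential-telescoping through with a learning rate that changes every round \emph{and} a non-uniform initial weight vector at the same time. The $\Phi_m(\eta)$-monotonicity trick disposes of the changing rate cleanly, but one must be careful that $t^s$ counts only adaptive occurrences of context $s$ (not the $N'$ exploration tasks), so the schedule $\eta^s_{t^s}$ is slightly too aggressive at small $t^s$ relative to the effective number of rounds seen; quantifying this mismatch is precisely what produces the $(N')^2$-type correction, and pinning down its constant takes some care. A minor technical point worth a remark is that $T_2^s$ is itself random (it depends on which tasks become fully labeled), so the per-context bounds are best established pathwise before applying Cauchy–Schwarz — which is legitimate since the final right-hand side depends on the split of rounds only through $\sum_s T_2^s=T-SKN'$.
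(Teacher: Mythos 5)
Your proposal is correct and follows essentially the same route as the paper: decompose the regret by context, bound each per-context term via the exponential-weighting potential argument with the time-varying rate $\eta^s_{t^s}$ (handled by monotonicity of $\Phi$ in $\eta$) and the warm start inherited from pure exploration, then combine the contexts via Jensen/Cauchy--Schwarz using $\sum_{s}T_2^s=T-SKN'$. The one detail you leave open---the constant $1/4$ in the $(N')^2$ term---is obtained in the paper by combining Jensen's inequality on $L_{j,0}^2$ with the fact that at most $K/2$ workers can disagree with the majority vote on each exploration task, though, as you yourself note, $\Phi_0(\eta_1)\le 0$ would let that term be dropped entirely.
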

\begin{proof}
\begin{align*}
\overline{R}_{T}&=\max_{g: \cS\rightarrow\{1,\ldots,K\}}\bE\left[\sum_{t=1}^{T_2}(l_{j_t,t}-l_{g(s),t})\right]\\
&=\sum_{s\in\cS}\max_{j=1,\ldots,K}\bE\left[\sum_{t\in\cT_2^s}(l_{j_t,t}-l_{j,t})\right]\\
&\leq\sum_{s\in\cS}\left(2\sqrt{T_2^s K\ln K}+\frac{{N'}^2}{4}\sqrt{\frac{\ln K}{K}}+N'\right)\\
&\leq 2\sqrt{(T-SKN')SK\ln K}+\frac{S{N'}^2}{4}\sqrt{\frac{\ln K}{K}}+SN',
\end{align*}
where the first inequality is implied by Lemma~\ref{lm} shown below and the second one is due to Jensen's inequality and $\sum_{s\in\cS}T_2^s=T_2$.
\end{proof}

Now, let us consider the following setting. Suppose there is only one context ($S=1$) for all tasks (i.e. in the homogeneous setting). Then we do not have to distinguish different contexts. In particular, $t^s$, $\eta^{s}_{t^{s}}$, $L^s_{j,t^s}$, $w^s_{j,t^s}$ and $T_2^s$ are respectively equivalent to $t$, $\eta_{t}$, $L_{j,t}$, $w_{j,t}$ and $T_2$ in this setting. For convenience, we omit the superscript $s$ in this setting. The regret now is 
\begin{align*}
\overline{R}'_{T}=\max_{j\in\{1,\ldots,K\}}\bE\left[\sum_{t=1}^{T_2}(l_{j_t,t}-l_{j,t})\right],
\end{align*}
which is the difference of losses between our strategy and the one that always selects the best worker in expectation. The following Lemma~\ref{lm} shows a bound of $\overline{R}'_{T}$.
\begin{lemma}
\label{lm}
If there is only one context, then the regret of BBTA with $\eta_{t}=\sqrt{\frac{\ln K}{tK}}$ is bounded as
\begin{align*}
\overline{R}'_T\leq 2\sqrt{(T-KN')K\ln K}+\frac{{N'}^2}{4}\sqrt{\frac{\ln K}{K}}+N'.
\end{align*}
\end{lemma}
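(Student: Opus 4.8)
The plan is to read the lemma as a regret bound for the EXP3-type strategy run in the adaptive assignment phase, whose only nonstandard features are the decreasing learning rate $\eta_t=\sqrt{\ln K/(tK)}$ (here $t$ is the within-phase step count, equal to $t^s$ since there is a single context) and the ``warm start'' inherited from pure exploration: the initial cumulative losses $L_{j,0}=\sum_{i\in\cI_1}\dOne_{y_{i,j}\neq\widehat{y}_i}\in[0,N']$. I would in fact establish the slightly sharper bound $\overline{R}'_T\le 2\sqrt{(T-KN')K\ln K}+N'$, which a fortiori gives the stated inequality.

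Recall the weights are $w_{j,t}=e^{-\eta_t L_{j,t-1}}$ with $L_{j,t}=L_{j,0}+\sum_{\tau=1}^t\widetilde{l}_{j,\tau}$, so that $\sum_j p_{j,t}e^{-\eta_t\widetilde{l}_{j,t}}$ equals $\bigl(\sum_j e^{-\eta_t L_{j,t}}\bigr)/\bigl(\sum_j e^{-\eta_t L_{j,t-1}}\bigr)$. Applying $e^{-x}\le 1-x+\tfrac12x^2$ for $x\ge 0$ (used in place of a Hoeffding-type bound precisely because $\widetilde{l}_{j,t}$ can be as large as $1/p_{j,t}$) together with $\ln(1+u)\le u$ yields the per-round estimate
\begin{align*}
\sum_j p_{j,t}\widetilde{l}_{j,t}\ \le\ \frac{\eta_t}{2}\sum_j p_{j,t}\widetilde{l}_{j,t}^2\ +\ \frac{1}{\eta_t}\Bigl(\ln\sum_j e^{-\eta_t L_{j,t-1}}-\ln\sum_j e^{-\eta_t L_{j,t}}\Bigr).
\end{align*}
Summing over $t=1,\dots,T_2$ and taking expectations, I would use that $\widetilde{l}_{j,t}$ is conditionally unbiased for $l_{j,t}$ — so $\bE[\sum_j p_{j,t}\widetilde{l}_{j,t}]=\bE[l_{j_t,t}]$ and $\bE[\widetilde{l}_{j^*,t}]=l_{j^*,t}$ by obliviousness — and that $\sum_j p_{j,t}\widetilde{l}_{j,t}^2=l_{j_t,t}^2/p_{j_t,t}$ has conditional mean $\sum_j l_{j,t}^2\le K$. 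The second-order contribution is then $\tfrac K2\sum_{t=1}^{T_2}\eta_t=\tfrac12\sqrt{K\ln K}\sum_{t\le T_2}t^{-1/2}\le\sqrt{T_2K\ln K}$, using $\sum_{t\le n}t^{-1/2}\le 2\sqrt n$.

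The log-partition terms do not telescope, because $\eta_t$ changes from step to step. The key ingredient is the monotonicity fact that $\eta\mapsto\tfrac1\eta\ln\bigl(\tfrac1K\sum_j e^{-\eta a_j}\bigr)$ is non-decreasing for $a\ge 0$ — equivalently $\tfrac{d}{d\eta}\bigl[\tfrac1\eta\ln\sum_j e^{-\eta a_j}\bigr]=-H(q_\eta)/\eta^2\le 0$, with $q_\eta$ the Gibbs weights and $H\le\ln K$. Writing $G_t=\tfrac1{\eta_t}\ln\sum_j e^{-\eta_t L_{j,t-1}}$ and $G'_t=\tfrac1{\eta_t}\ln\sum_j e^{-\eta_t L_{j,t}}$, one has $\sum_{t=1}^{T_2}(G_t-G'_t)=G_1+\sum_{t=1}^{T_2-1}(G_{t+1}-G'_t)-G'_{T_2}$; the monotonicity fact (with $\eta_{t+1}\le\eta_t$) bounds each $G_{t+1}-G'_t$ by $\ln K\,(\tfrac1{\eta_{t+1}}-\tfrac1{\eta_t})$, which after telescoping cancels the $\tfrac{\ln K}{\eta_1}$ coming from $G_1\le\tfrac1{\eta_1}\ln K$ (valid since $L_{j,0}\ge 0$), leaving $\tfrac{\ln K}{\eta_{T_2}}-G'_{T_2}\le\sqrt{T_2K\ln K}+L_{j^*,T_2}$ (the last step from $\sum_j e^{-\eta_{T_2}L_{j,T_2}}\ge e^{-\eta_{T_2}L_{j^*,T_2}}$). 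Putting the two parts together and using $L_{j^*,T_2}-\sum_t\widetilde{l}_{j^*,t}=L_{j^*,0}\le N'$ deterministically (pure exploration involves no random worker selection), for every fixed comparator $j^*$,
\begin{align*}
\bE\Bigl[\sum_{t=1}^{T_2}(l_{j_t,t}-l_{j^*,t})\Bigr]\ \le\ N'+2\sqrt{T_2K\ln K}\ =\ N'+2\sqrt{(T-KN')K\ln K};
\end{align*}
taking the maximum over $j^*\in\{1,\dots,K\}$ concludes.

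The main obstacle — essentially the only nontrivial point — is the decreasing-rate bookkeeping just sketched: one must invoke the monotonicity fact (or an equivalent estimate of $\tfrac1{\eta_{t+1}}-\tfrac1{\eta_t}$ against the Gibbs entropy) so that the warm-start magnitude $N'$ enters the final bound only through the unit-coefficient offset $L_{j^*,0}$ and not through the large factor $1/\eta_{T_2}$ that a naive ``non-uniform prior'' reading of the warm start would suggest; the saving grace is that the offset $L_{j,0}$ is re-scaled by the shrinking $\eta_t$ at every later step. The remaining term $\tfrac{(N')^2}{4}\sqrt{\ln K/K}$ in the stated bound is slack that one incurs if the starting quantity $\tfrac1{\eta_1}\ln\bigl(\tfrac1K\sum_j e^{-\eta_1 L_{j,0}}\bigr)$ is bounded by a second-order (Hoeffding-type) estimate on the $[0,N']$-valued vector $L_{\cdot,0}$ — whose range is $N'$ — instead of simply by $0$.
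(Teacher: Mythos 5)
Your proof is correct and follows essentially the same route as the paper's: the Bubeck-style analysis of exponential weighting with a decreasing learning rate, combining the second-order bound $\frac{\eta_t}{2}\sum_j p_{j,t}\widetilde{l}_{j,t}^{\,2}$ on the estimated losses with the monotonicity in $\eta$ of the normalized log-partition potential to control the non-telescoping terms, and absorbing the warm start only through $L_{j^*,0}\le N'$. The one substantive difference is that you bound the initial potential by $\frac{\ln K}{\eta_1}$ (equivalently, $\Phi_0(\eta_1)=\frac{1}{\eta_1}\ln\frac{1}{K}\sum_j e^{-\eta_1 L_{j,0}}\le 0$ since $L_{j,0}\ge 0$) instead of the paper's second-order estimate $\frac{\eta_1 {N'}^2}{4}$, which is valid and yields the slightly sharper bound $2\sqrt{(T-KN')K\ln K}+N'$, correctly identifying the $\frac{{N'}^2}{4}\sqrt{\ln K/K}$ term in the stated lemma as slack.
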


The proof of Lemma~\ref{lm} is provided in Appendix.

\begin{remark}
Theorem~\ref{thm} and Lemma~\ref{lm} show sub-linear regret bounds with respect to $T$ for BBTA, indicating that the performance of our strategy converges to that of the expected optimal one as the budget $T$ goes to infinity.
\end{remark}

\begin{remark}
The pure exploration phase is necessary for obtaining some knowledge of workers' reliability before adaptively assigning tasks, and $N'$ controls the length of the pure exploration phase. In Theorem~\ref{thm}, it is easy to see that a larger $N'$ makes the term $2\sqrt{(T-SKN')SK\ln K}$ (corresponding to the adaptive assignment phase) smaller but results in larger $\frac{{N'}^2}{4}\sqrt{\frac{\ln K}{K}}+N'$ (related to the pure exploration phase). Moreover, a longer pure exploration phase also consumes a larger proportion of the total budget. The pure exploration phase is effective (as we will see in Section~\ref{sec:experiment}), but how effective it can be is not only related to the choice of $N'$ (even we could obtain the optimal $N'$ that minimizes the bound), but also depends on many other factors, such as the true accuracy of workers and how different their labeling performances are from each others, which are unfortunately unknown in advance. A practical choice used in our experiments in Sec.~\ref{sec:experiment} is to set a very small $N'$ (e.g. $N'=1$).
\end{remark}

\section{Review of Existing Task Assignment Methods for Crowdsourcing}
\label{sec:exist}

In this section, we review the existing task assignment methods for crowdsourcing and discuss similarities to and differences from the proposed method.

\subsection{IEThresh}

IEThresh \citep{donmez09} builds upon \emph{interval estimation} (IE) \citep{kaelbling90}, which is used in reinforcement learning for handling the exploration-exploitation trade-off in action selection. Before each selection, the IE method estimates the \emph{upper confidence interval} (UI) for each action $a$ according to its cumulative reward:
\begin{align*}
\mathrm{UI}(a)=m(a)+t_{\frac{\alpha}{2}}^{(n-1)}\frac{s(a)}{\sqrt{n}},
\end{align*}
where $m(a)$ and $s(a)$ are the mean and standard deviation of rewards that has been received so far by choosing action $a$, $n$ denotes the number of times $a$ has been selected so far, and $t_{\frac{\alpha}{2}}^{(n-1)}$ is the critical value for Student's $t$-distribution with $n-1$ degrees of freedom at significance level $\alpha/2$. In experiments, we set $\alpha$ at 0.05, following \citet{donmez09}.

In IEThresh, taking the action $a_j$ corresponds to selecting worker $j$ to ask for a label. The reward is 1 if the worker's label agrees with the majority vote, and 0 otherwise. At each step, a new task arrives, UI scores of all workers are re-calculated, and a subset of workers with indices
\begin{align*}
\{j\mid \mathrm{UI}(a_j)\geq\epsilon\cdot\max_a \mathrm{UI}(a)\}
\end{align*}
are asked for labels. It is easy to see that the pre-determined threshold parameter $\epsilon$ controls the size of worker subset at each step.

\subsection{CrowdSense}

Similarly to IEThresh, CrowdSense \citep{ertekin14} also dynamically samples worker subsets in an online mode. The criterion that CrowdSense uses for handling exploration-exploitation in worker selection is described as follows. 

Before assigning a new task $i$ to workers, CrowdSense first calculates a quality estimate for each worker:
\begin{align*}
Q_{j}=\frac{a_{j}+K}{c_{j}+2K},
\end{align*}
where $c_{j}$ is the number of times worker $j$ has given a label, $a_{j}$ represents how many of those labels were consistent with the weighted vote, and $K$ is a smoothing parameter. In experiments, we set $K$ at 100, which is empirically shown by \citet{ertekin14} as the best choice. Then a confidence score is calculated as
\begin{align*}
\textrm{Score}(S_i)=\sum_{j\in S_i}y_{i,j}Q_{j},
\end{align*}
where $S_i$ is a subset of workers initially containing two top-quality workers and another random worker. Then CrowdSense determines whether to add a new worker $l$ to $S_i$ according to the criteria:
\begin{align*}
\frac{|\textrm{Score}(S_i)|-Q_{l}}{|S_i|+1}<\epsilon,
\end{align*}
where $\epsilon$ is a pre-determined threshold parameter. If the above is true, the candidate worker is added to $S_i$, and Score$(S_i)$ is re-calculated. After fixing the subset of workers for task $i$, CrowdSense aggregates the collected labels by the weighted voting mechanism, using $Q_{j}$ as the weight of worker $j$. 

\subsection{Optimistic Knowledge Gradient}

Optimistic Knowledge Gradient (OptKG) \citep{chen13} uses an $N$-coin-tossing model, formulates the task assignment problem as a Bayesian Markov decision process (MDP), and obtains the optimal allocation sequence for any finite budget $T$ via a computationally efficient approximate policy.

For better illustration, a simplified model with noiseless workers is first considered. In the model, task $i$ is characterized by $\theta_i$ drawn from a known Beta prior distribution, $\text{Beta}(a_i^0,b_i^0)$. In practice, we may simply start from $a_i^0=b_i^0=1$ if there is no prior knowledge \citep{chen13}. At each stage $t$ with $\text{Beta}(a_i^t,b_i^t)$ as the current posterior distribution for $\theta_i$, task $i_t$ is picked, and its label, drawn from Bernoulli distribution $y_{i_t}\sim \text{Bernoulli}(\theta_{i_t})$, is acquired. Then $\{a_i^t,b_i^t\}_{i=1}^N$ are put into an $n\times 2$ matrix $S^t$, called a state matrix with row $i$ as $S_i^t=(a_i^t,b_i^t)$. The state matrix is updated according to the following rule:
\begin{align*}
S^{t+1}=\begin{cases}
S^t+(\ve_{i_t},\vZero) &\text{if } y_{i_t}=1,\\
S^t+(\vZero,\ve_{i_t}) &\text{if } y_{i_t}=-1,
\end{cases}
\end{align*}
where $\ve_{i_t}$ is an $n$-dimensional vector with 1 at the $i_t$-th entry and 0 at all others. The state transition probability is calculated as
\begin{align*}
p(y_{i_t}=1\mid S^t,i_t)=\bE[\theta_{i_t}\mid S^t]=\frac{a_{i_t}^t}{a_{i_t}^t+b_{i_t}^t}.
\end{align*}
The stage-wise expected reward is defined as
\begin{align*}
R(S^t,i_t)=p_1R_1(a_{i_t}^{t},b_{i_t}^{t})+p_2R_2(a_{i_t}^{t},b_{i_t}^{t}),
\end{align*}
where
\begin{align*}
& p_1=p(y_{i_t}=1\mid S^t,i_t),\\
& p_2=1-p(y_{i_t}=1\mid S^t,i_t),\\
& R_1(a_{i_t}^{t},b_{i_t}^{t})=h(I(a_{i_t}^{t}+1,b_{i_t}^{t}))-h(I(a_{i_t}^{t},b_{i_t}^{t})),\\ & R_2(a_{i_t}^{t},b_{i_t}^{t})=h(I(a_{i_t}^{t},b_{i_t}^{t}+1))-h(I(a_{i_t}^{t},b_{i_t}^{t})),\\
& I(a_{i_t}^{t},b_{i_t}^{t})=p(\theta\geq0.5\mid\theta\sim\text{Beta}(a_{i_t}^{t},b_{i_t}^{t})),\\
& h(x)=\max(x,1-x).
\end{align*}
Then the following optimization problem is solved:
\begin{align*}
V(S^0)=G_0(S^0)+\sup_\pi\bE_\pi\left[\sum_{t=0}^{T-1}R(S^t,i_t)\right],
\end{align*}
where $G_0(S^0)=\sum_{i=1}^{n}h(I(a_i^0,b_i^0))$. With the above definitions, the problem is formulated as a $T$-stage MDP associated with a tuple
\begin{align*}
\{T,\{\cS^t\},\cA,p(y_{i_t}\mid S^t,i_t),R(S^t,i_t)\},
\end{align*}
where
\begin{align*}
\cS^t=\left\{\{a_i^t,b_i^t\}_{i=1}^{n}:a_i^t\geq a_i^0,b_i^t\geq b_i^0,\sum_{i=1}^{n}(a_i^t-a_i^0)+(b_i^t-b_i^0)=t\right\},
\end{align*}
and the action space is the set of indices for tasks that could be labeled next: $\cA=\{1,\ldots,n\}$.

Since the computation of dynamic programming for obtaining the optimal policy of the above formulation is intractable, a computationally efficient and consistent approximate policy is employed, resulting in the criterion for task selection at each stage $t$:
\begin{align*}
i_t = \argmax_{i\in\{1,\ldots,n\}}\left(\max(R_1(a_i^t,b_i^t),R_2(a_i^t,b_i^t))\right).
\end{align*}
Finally, OptKG outputs the positive set
\begin{align*}
H_T=\{i:a_i^T\geq b_i^T\}.
\end{align*}

Based on the simplified model above, workers' reliability can be further introduced. Specifically, worker $j$ is characterized by a scalar parameter
\begin{align*}
\rho_j=p(y_{i,j}=y_i^*\mid y_i^*),
\end{align*}
and is assumed drawn from a Beta prior distribution: $\rho_j\sim\text{Beta}(c_j^0,d_j^0)$. Then at each stage, a pair of worker and task is adaptively selected, according to a new criterion, which is derived in Appendix.5 in the original OptKG paper \citep{chen13}. In experiments, we follow \citet{chen13} and set $c_j^0=4$ and $d_j^0=1$ for each worker, which indicates that we have the prior belief that the average accuracy of workers is $4/5=80\%$.

\subsection{Discussion}
Although IEThresh and CrowdSense employ different criterions for dealing with the exploration-exploitation trade-off in worker selection, they share the similar mechanism of dynamically sampling worker subsets. In particular, at each step, a new task comes, workers' reliability is learned according to their labeling performances on previous tasks, and a subset of workers considered to be reliable is sampled for labeling the new task. However, in heterogeneous crowdsourcing, a worker in the subset who is reliable at previous tasks may be bad at new ones. On the other hand, the worker who is good at new tasks may have already been eliminated from the subset. Therefore, it is reasonable to model the task-dependent reliability for workers, and then match each task to workers who can do it best.

Another issue in IEThresh and CrowdSense is that the exact amount of total budget is not pre-fixed. In these two methods, a threshold parameter $\epsilon$ is used for controlling the size of worker subsets. That is, $\epsilon$ determines the total budget. However, how many workers in each subset is unknown beforehand. Therefore, we will not know the exact amount of total budget until the whole task assignment process ends. This is not so practical in crowdsourcing, since in the task assignment problem, we pursue not only collecting reliable labels but also intelligently using the pre-fixed budget. Moreover, both of IEThresh and CrowdSense lack of theoretical analyses about the relation between the budget and the performance.

OptKG and the proposed method (BBTA) attempt to intelligently use a pre-fixed budget, and ask one worker for a label of the current task at each step. OptKG formulates the task assignment problem as a Bayesian MDP, and is proved to produce a consistent policy in homogeneous worker setting (i.e. the policy will achieve 100\% accuracy almost surely when the total budget goes to infinity). However, when the heterogeneous reliability of workers is introduced, more sophisticated approximation is also involved in OptKG, making it difficult to give theoretical analysis for OptKG in heterogeneous worker setting. On the other hand, BBTA is a contextual bandit formulation designed for heterogeneous crowdsourcing, and the regret analysis demonstrates the performance of our task assignment strategy will converge to that of the optimal one when the total budget goes to infinity.

\section{Experiments}
\label{sec:experiment}

In this section, we experimentally evaluate the usefulness of the proposed bandit-based task assignment (BBTA) method. To compare BBTA with existing methods, we first conduct experiments on benchmark data with simulated workers, and then use real data for further comparison. All of the experimental results are averaged over 30 runs.

\subsection{Benchmark Data}

\begin{figure*}[t]
\centering
\subfigure[$N=351,K=30,S=3$]{
\includegraphics[width=0.31\textwidth]{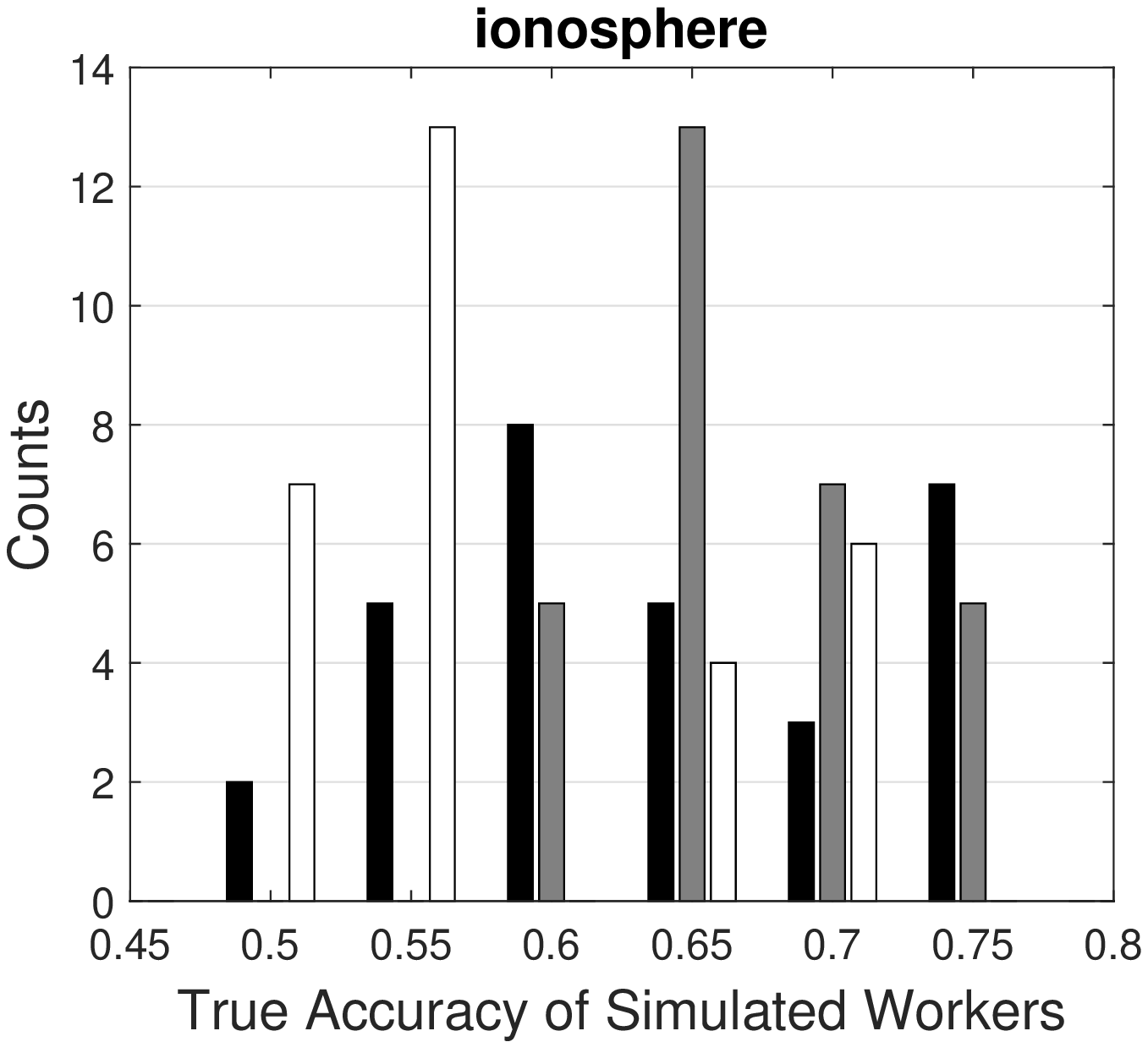}
\label{fig:ionosphere-workers}
}
\subfigure[$N=569,K=40,S=4$]{
\includegraphics[width=0.31\textwidth]{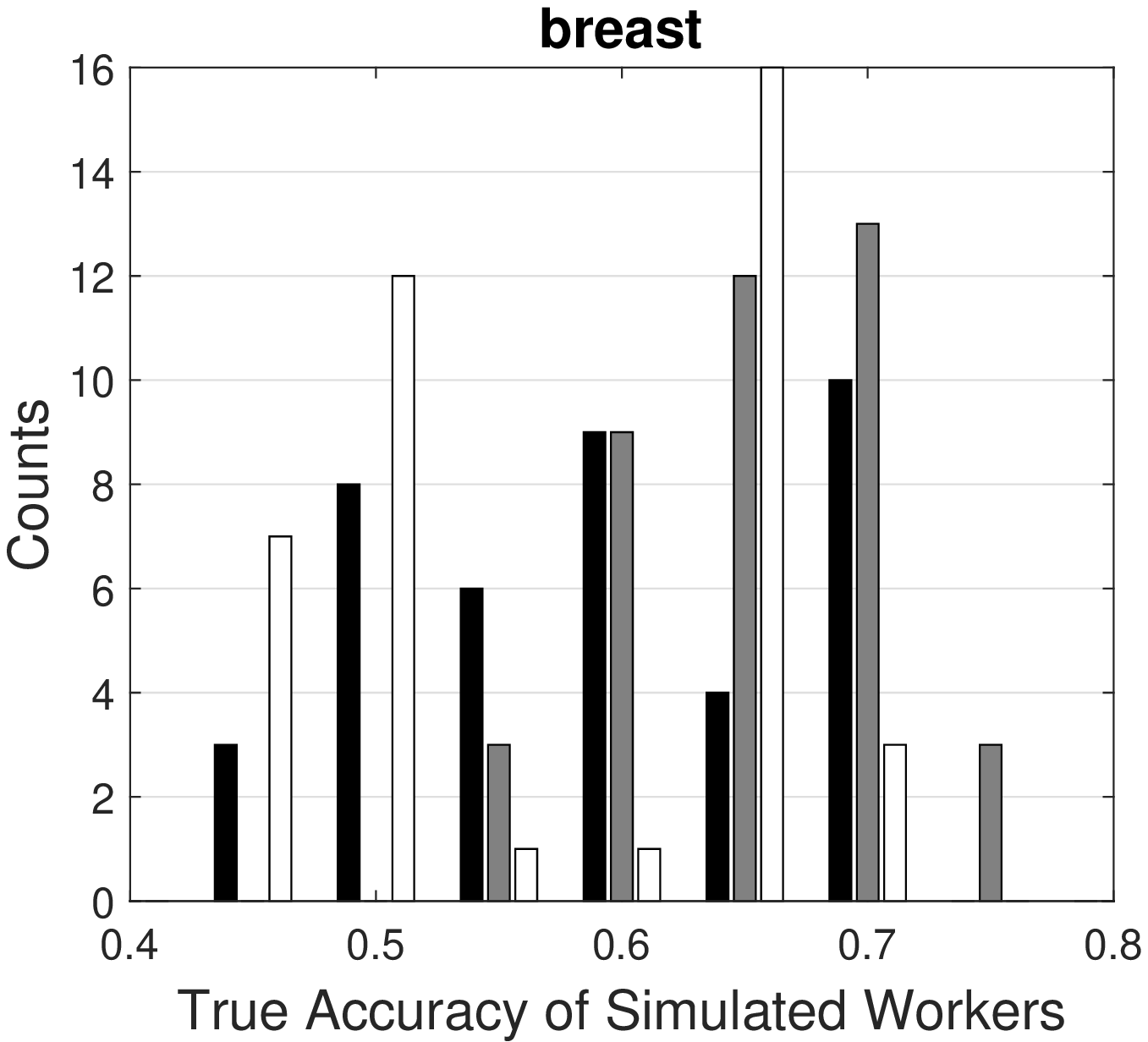}
\label{fig:breast-workers}
}
\subfigure[$N=768,K=50,S=5$]{
\includegraphics[width=0.31\textwidth]{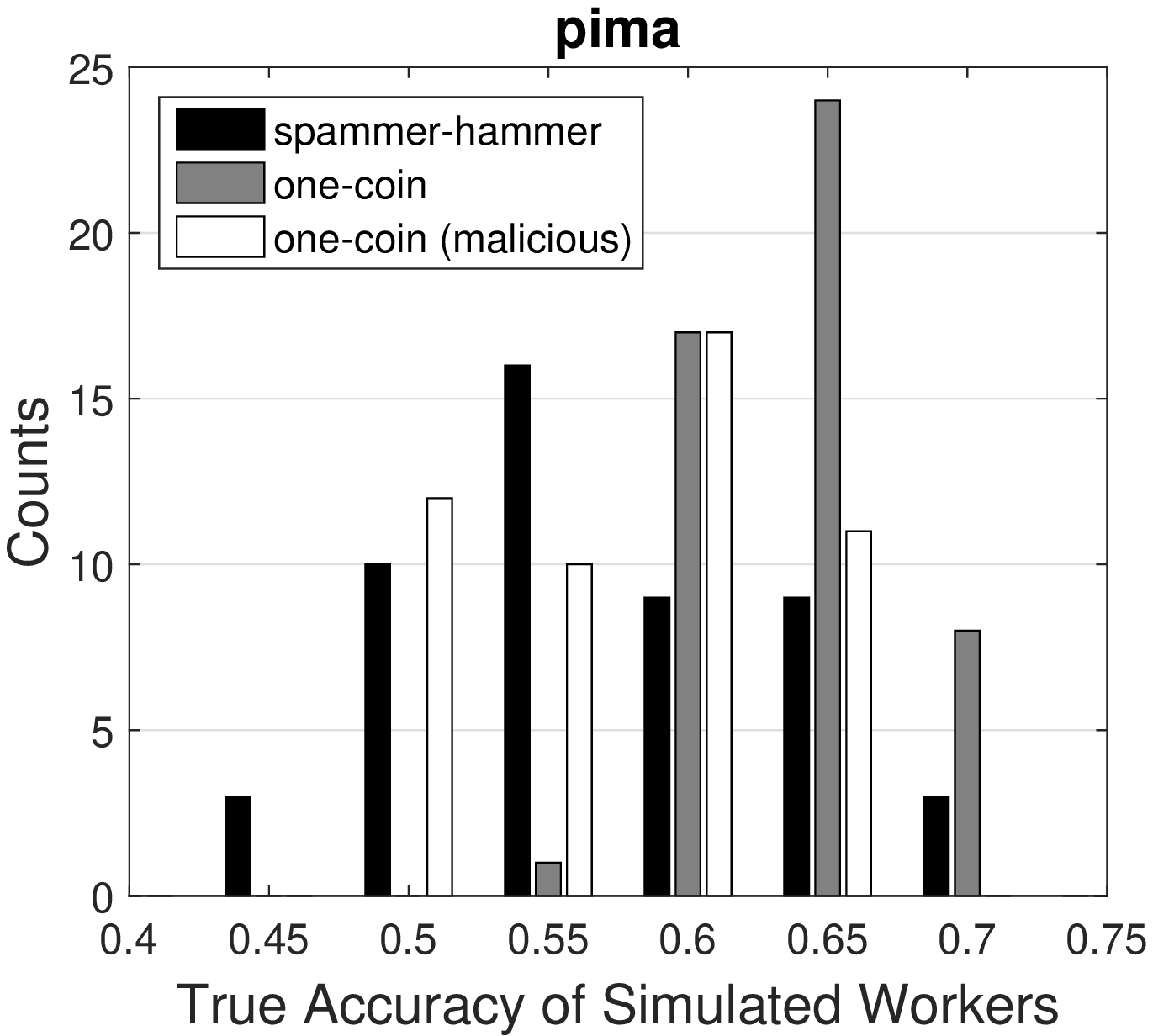}
\label{fig:pima-workers}
}
\caption{Distribution of true accuracy of simulated workers for three benchmark datasets with three worker models.}
\label{fig:bench-workers}
\end{figure*}

\begin{figure*}[t]
\centering
\subfigure[$N=351,K=30,S=3$]{
\includegraphics[width=0.31\textwidth]{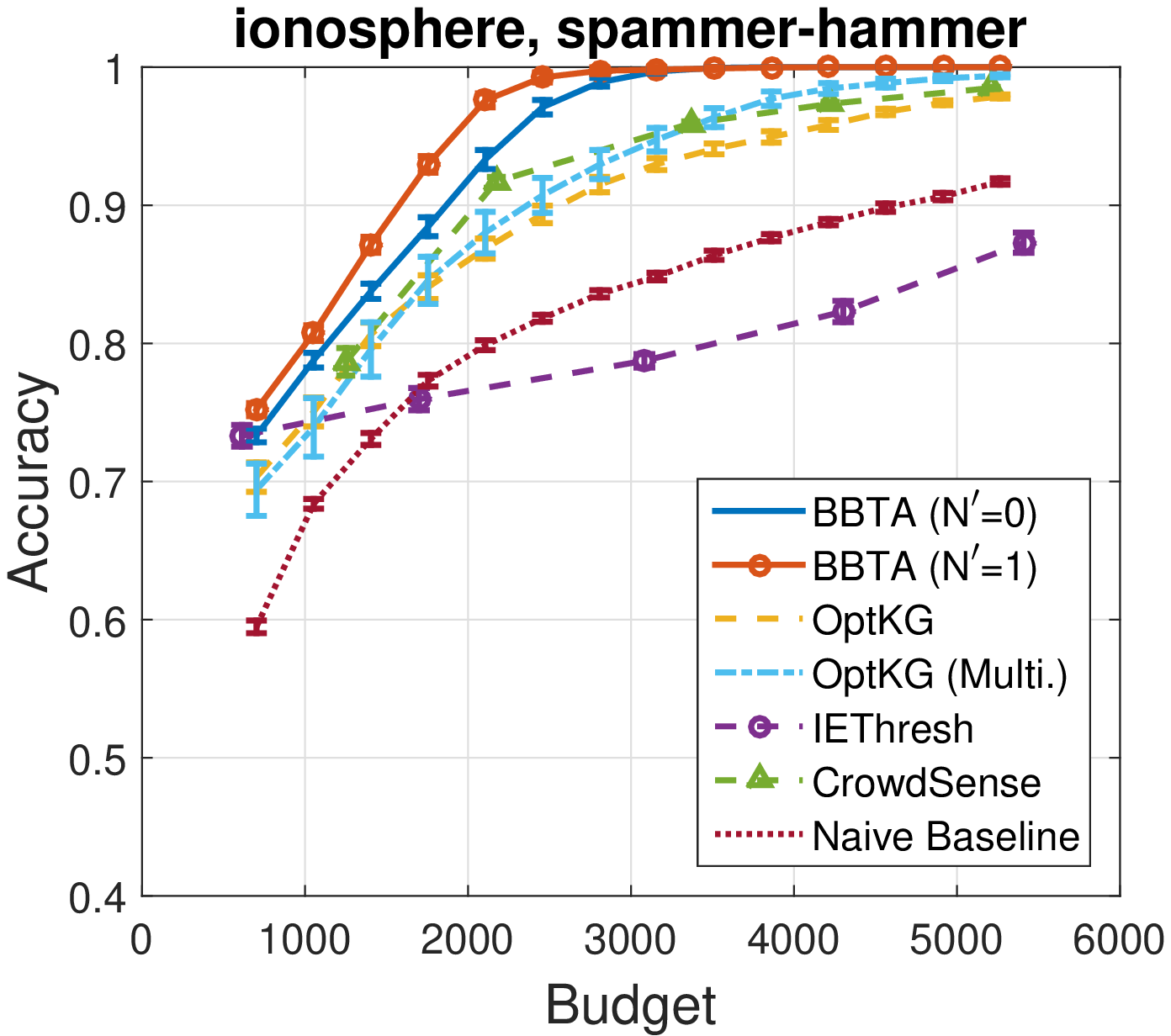}
\label{fig:ionosphere-spammer-acc}
}
\subfigure[$N=569,K=40,S=4$]{
\includegraphics[width=0.31\textwidth]{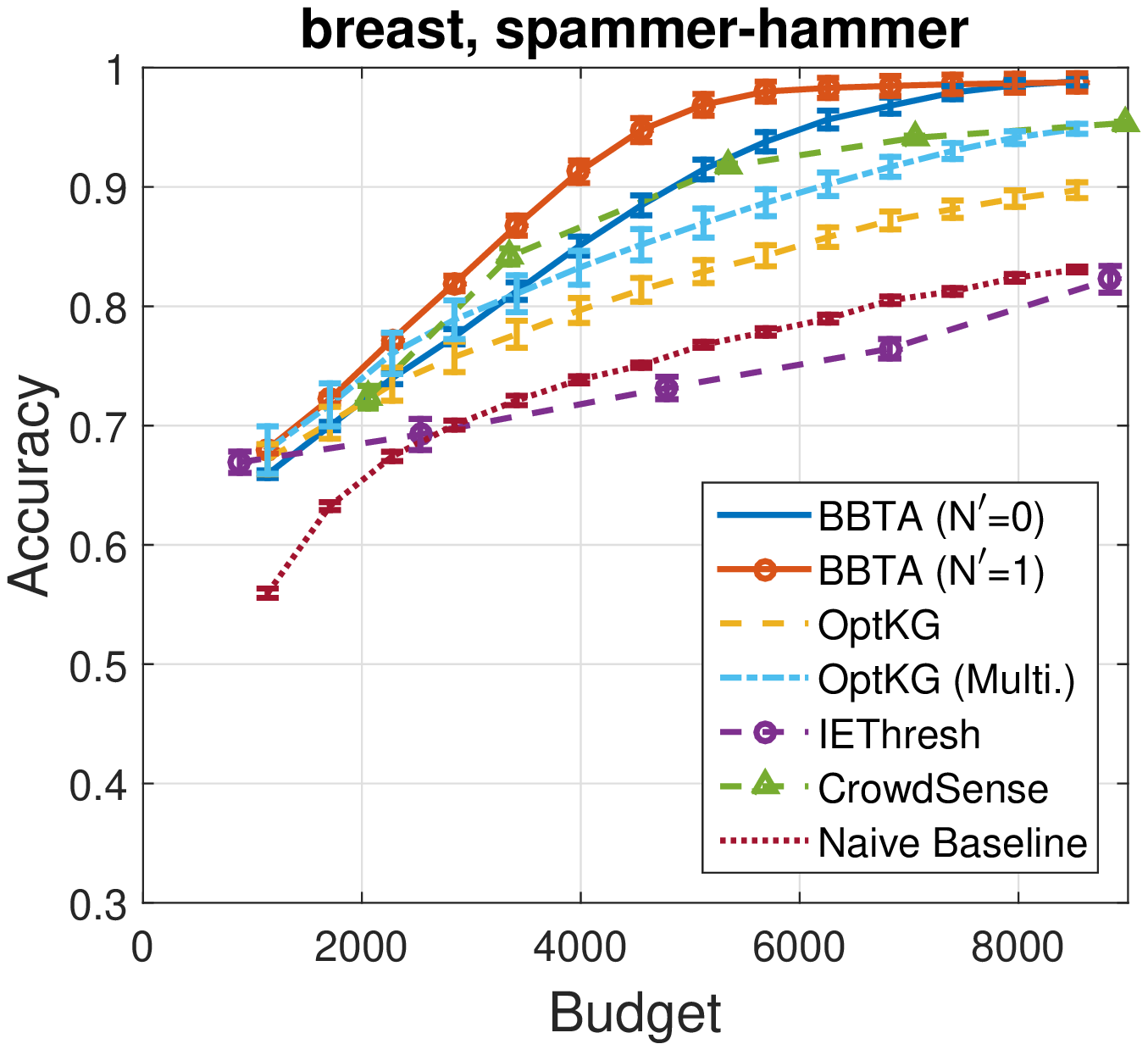}
\label{fig:breast-spammer-acc}
}
\subfigure[$N=768,K=50,S=5$]{
\includegraphics[width=0.31\textwidth]{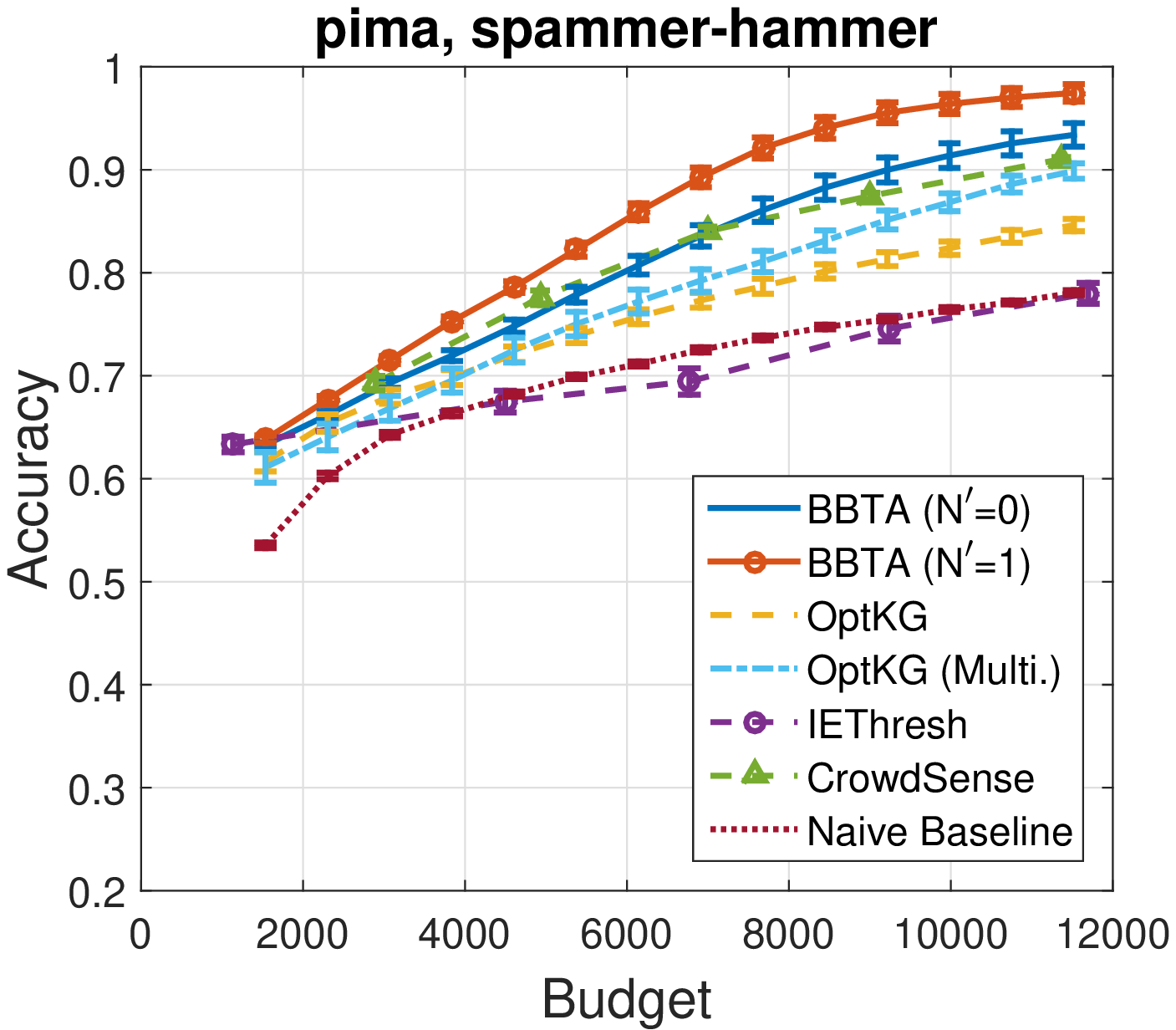}
\label{fig:pima-spammer-acc}
}
\subfigure[$N=351,K=30,S=3$]{
\includegraphics[width=0.31\textwidth]{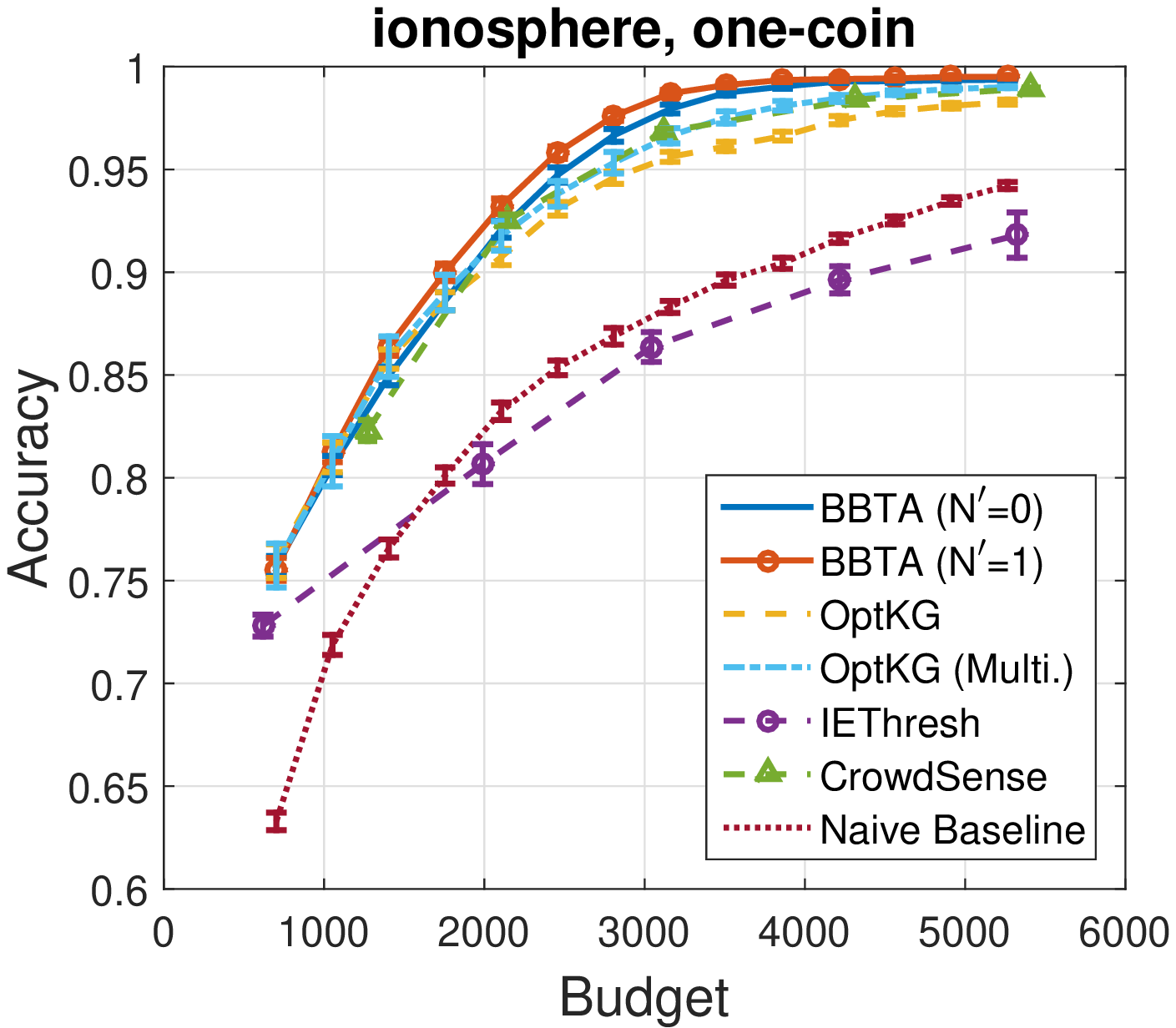}
\label{fig:ionosphere-onecoin-acc}
}
\subfigure[$N=569,K=40,S=4$]{
\includegraphics[width=0.31\textwidth]{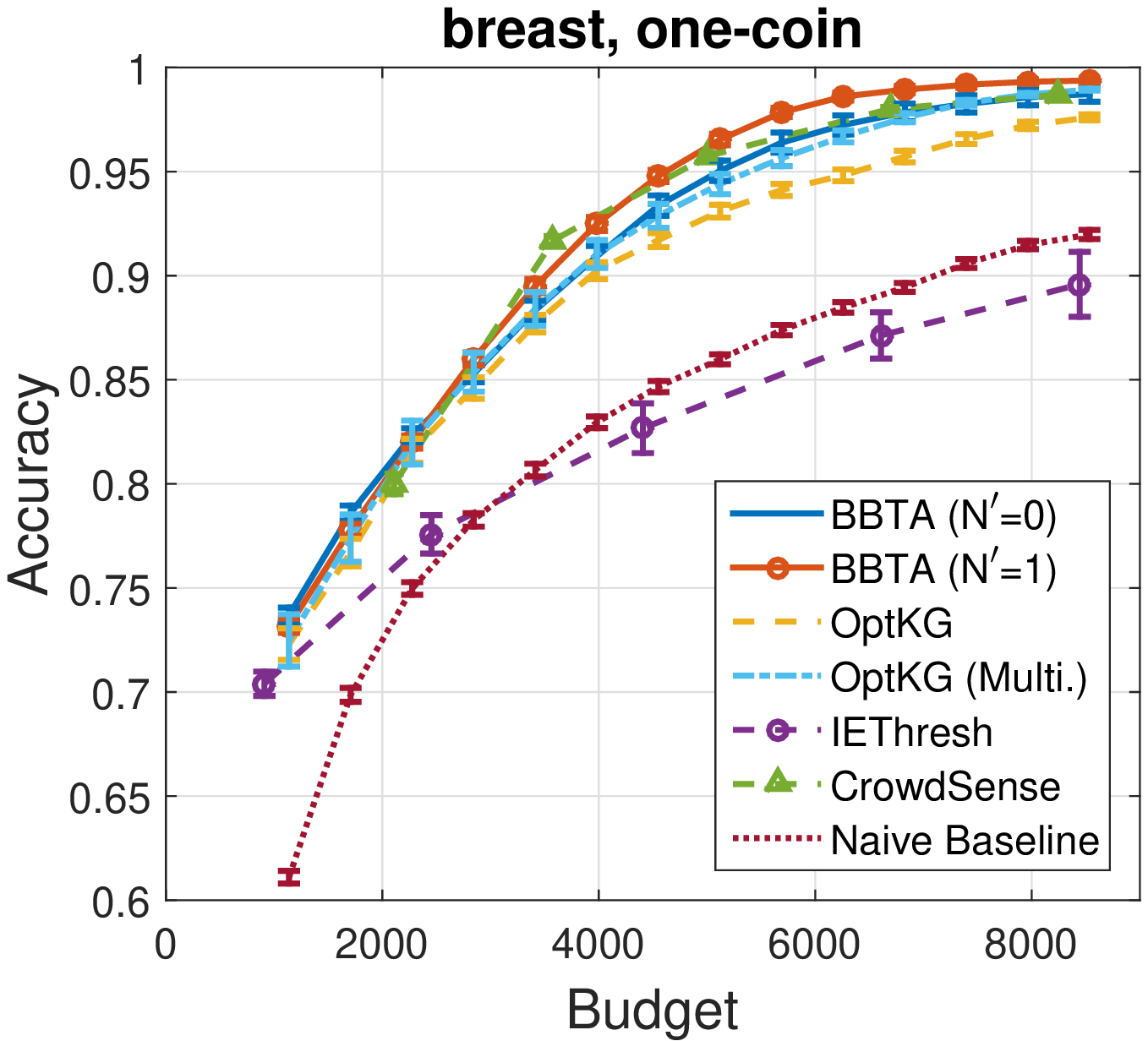}
\label{fig:breast-onecoin-acc}
}
\subfigure[$N=768,K=50,S=5$]{
\includegraphics[width=0.31\textwidth]{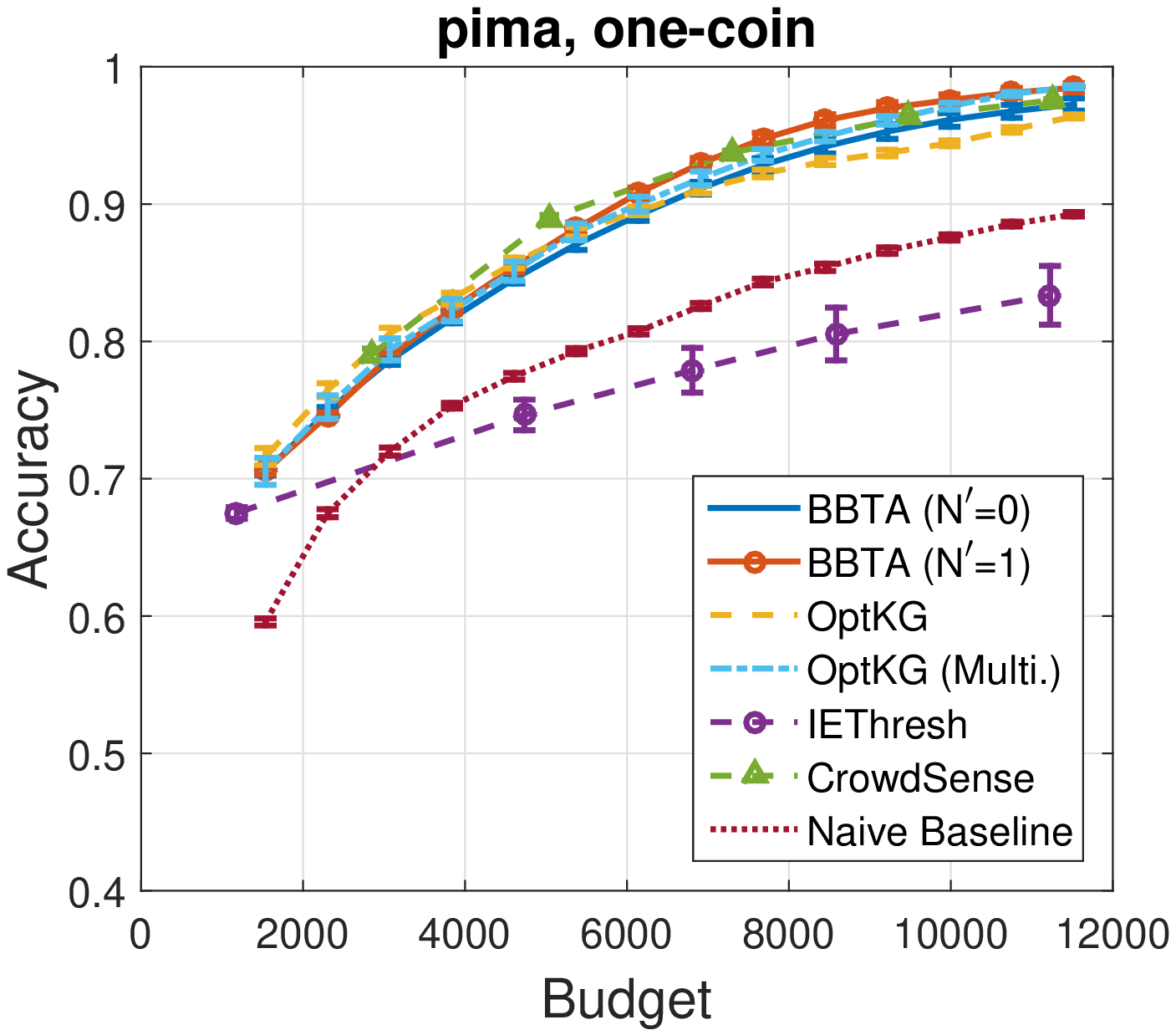}
\label{fig:pima-onecoin-acc}
}
\subfigure[$N=351,K=30,S=3$]{
\includegraphics[width=0.31\textwidth]{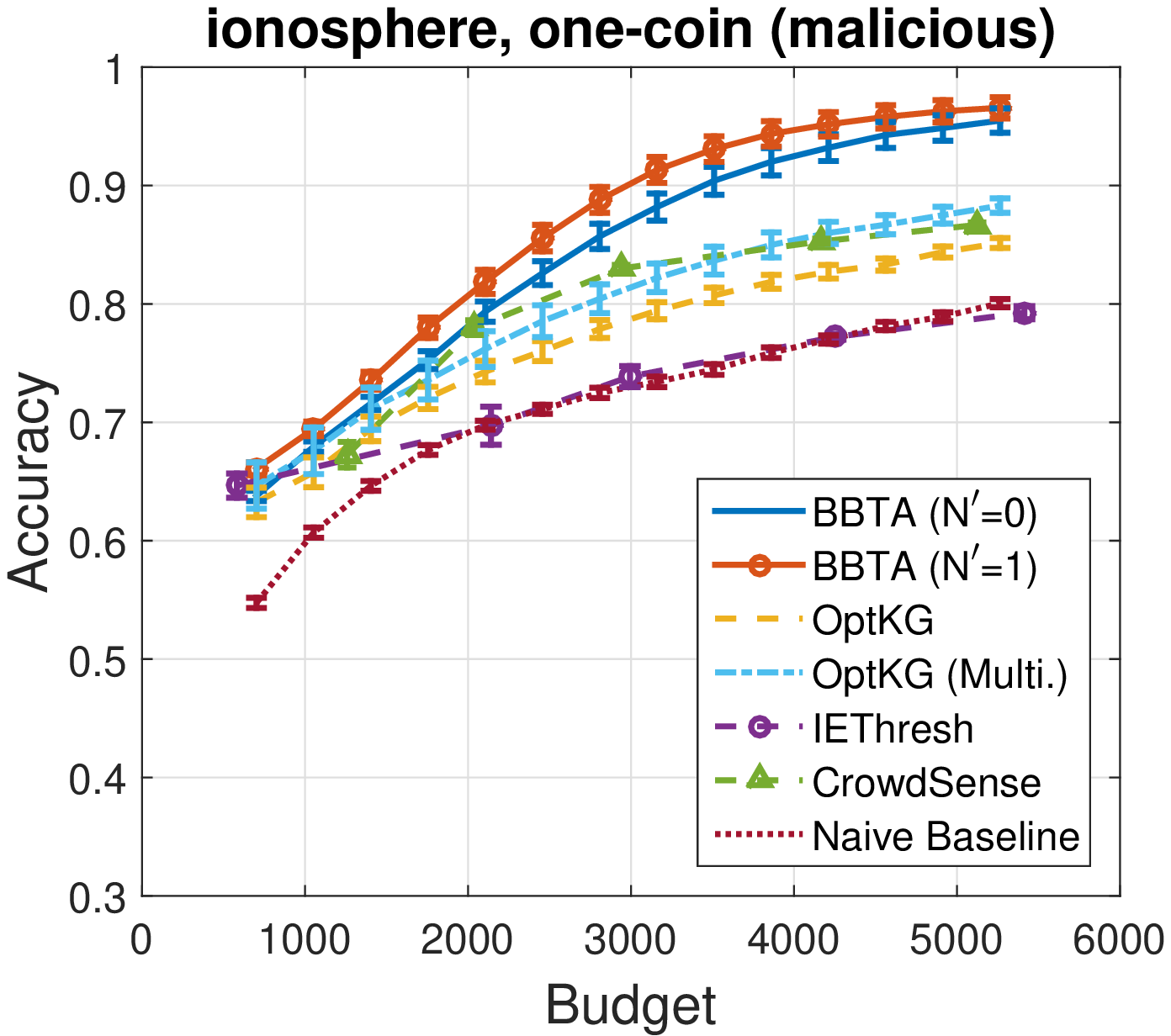}
\label{fig:ionosphere-onecoin_m-acc}
}
\subfigure[$N=569,K=40,S=4$]{
\includegraphics[width=0.31\textwidth]{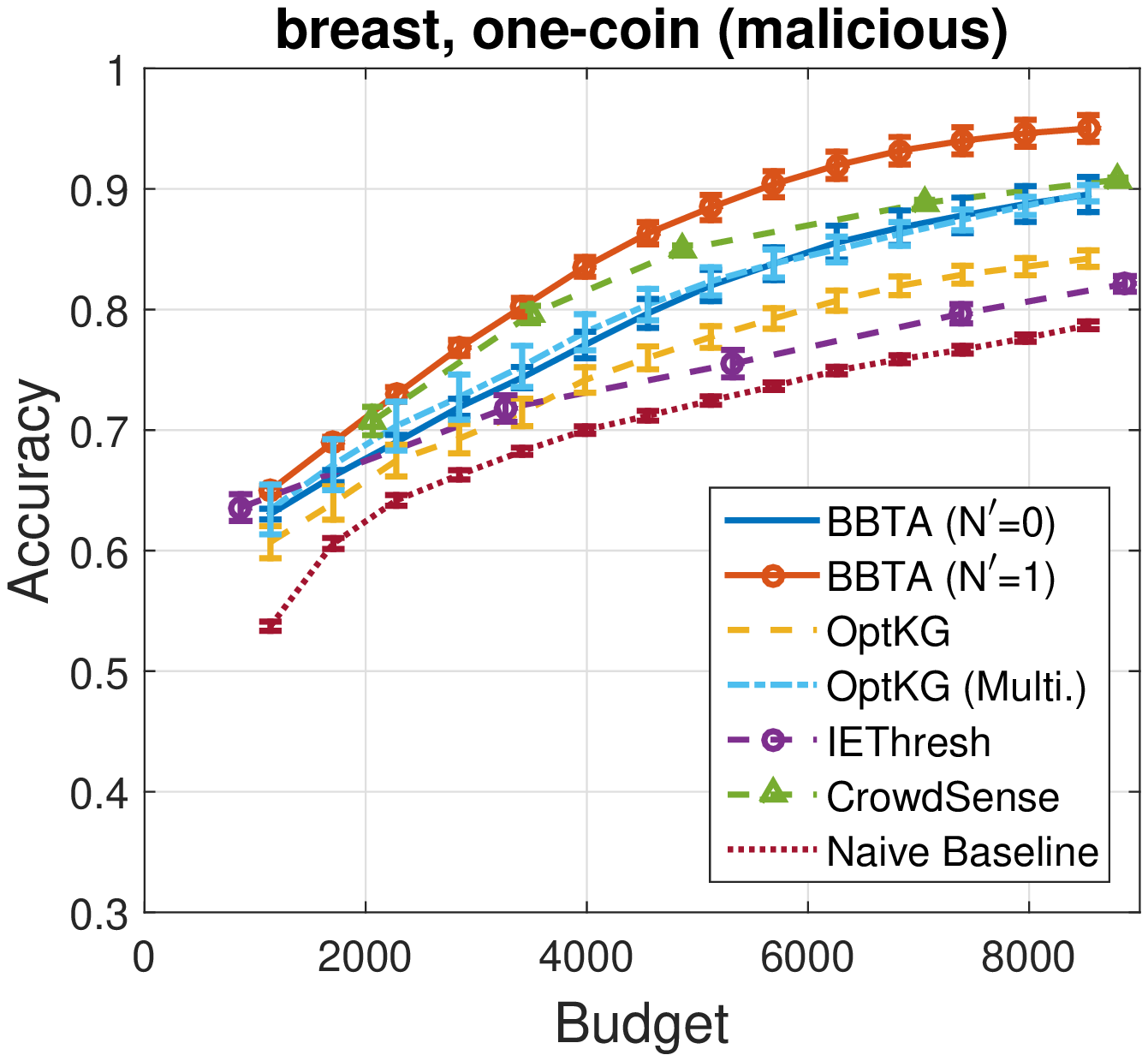}
\label{fig:breast-onecoin_m-acc}
}
\subfigure[$N=768,K=50,S=5$]{
\includegraphics[width=0.31\textwidth]{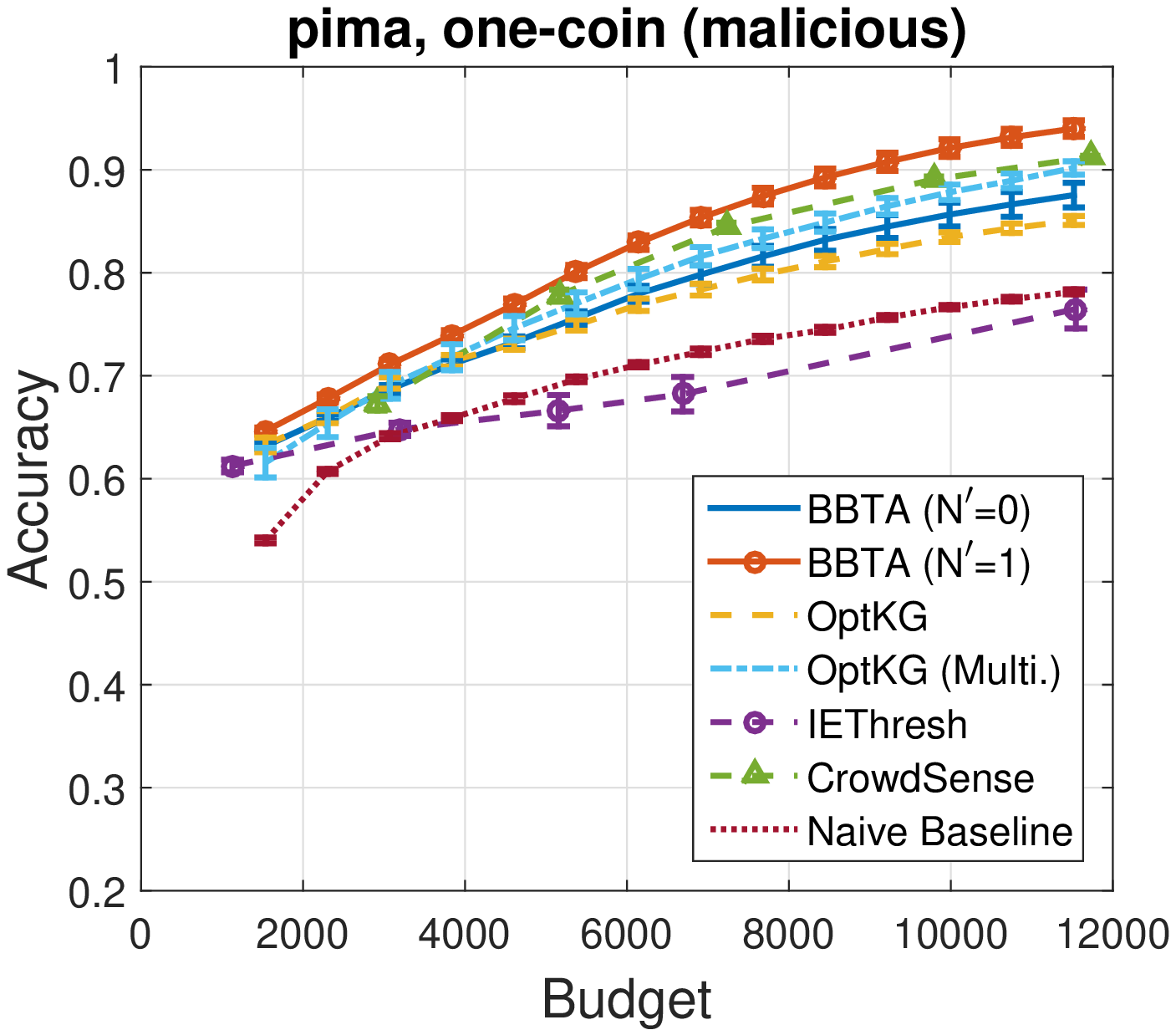}
\label{fig:pima-onecoin_m-acc}
}
\caption{Comparison results on three benchmark datasets with three worker models.}
\label{fig:bench-acc}
\end{figure*}

\begin{figure*}[t]
\centering
\subfigure[$N=351,K=30,S=3$]{
\includegraphics[width=0.31\textwidth]{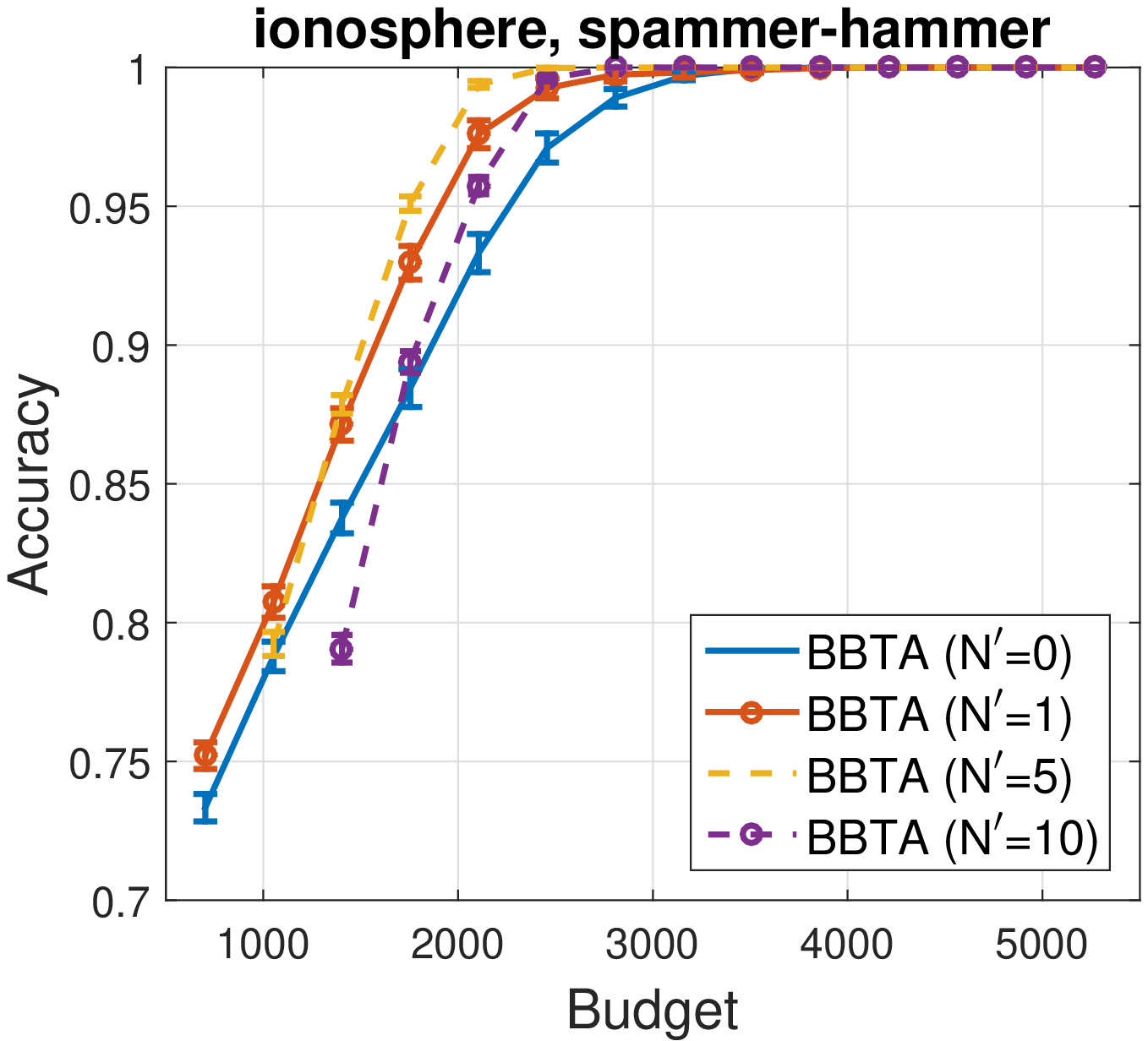}
\label{fig:ionosphere-spammer-nprime}
}
\subfigure[$N=569,K=40,S=4$]{
\includegraphics[width=0.31\textwidth]{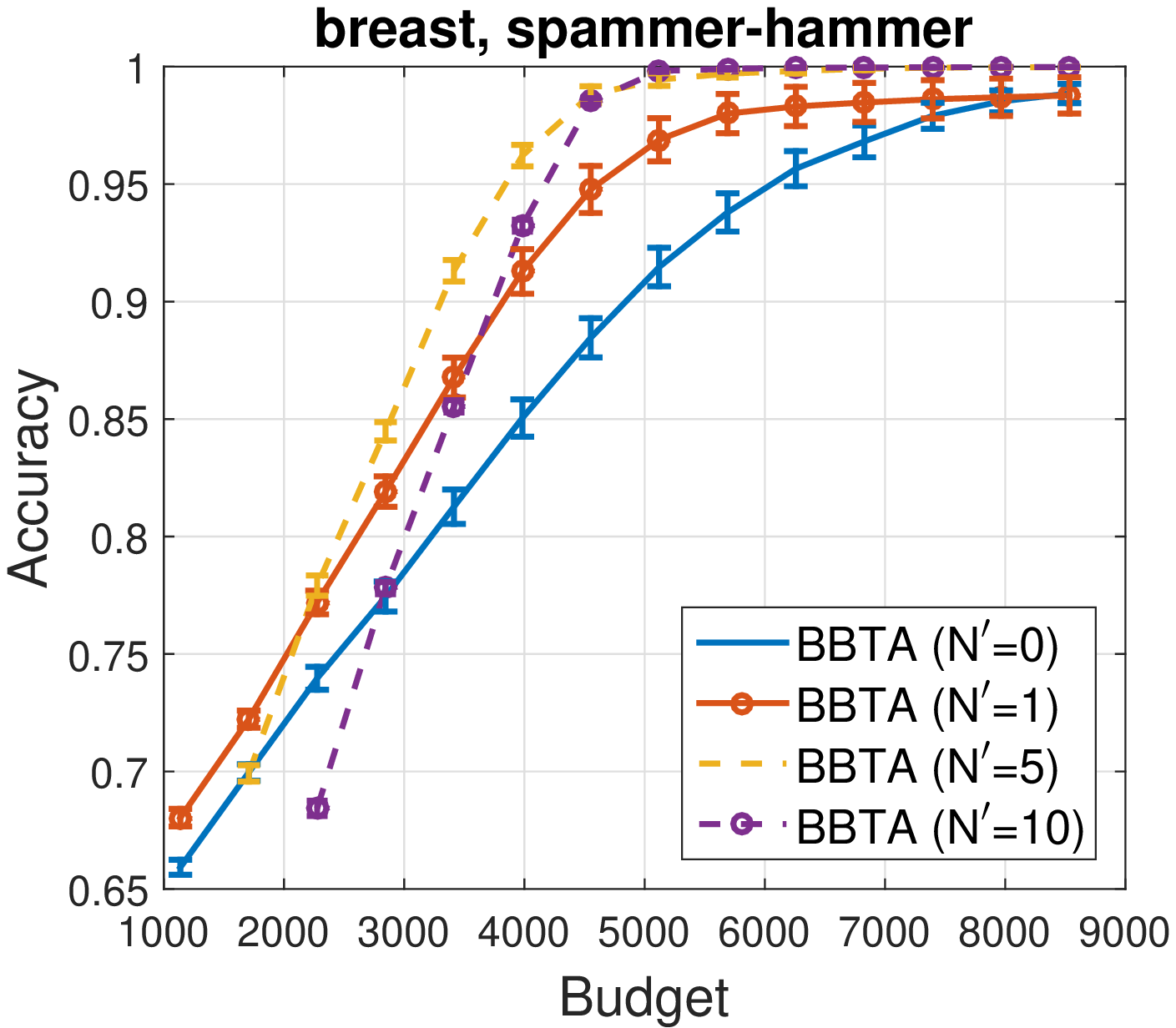}
\label{fig:breast-spammer-nprime}
}
\subfigure[$N=768,K=50,S=5$]{
\includegraphics[width=0.31\textwidth]{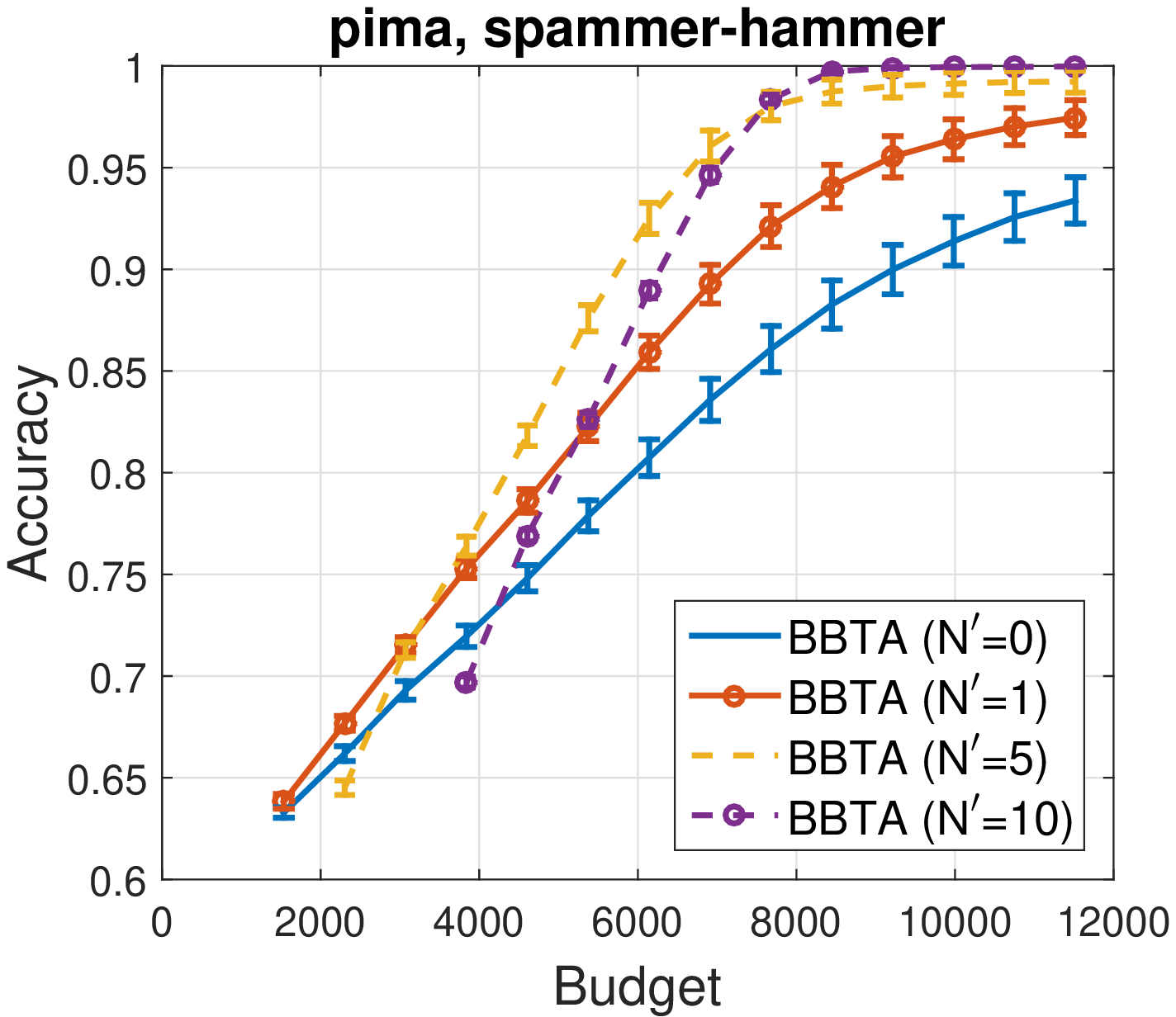}
\label{fig:pima-spammer-nprime}
}
\subfigure[$N=351,K=30,S=3$]{
\includegraphics[width=0.31\textwidth]{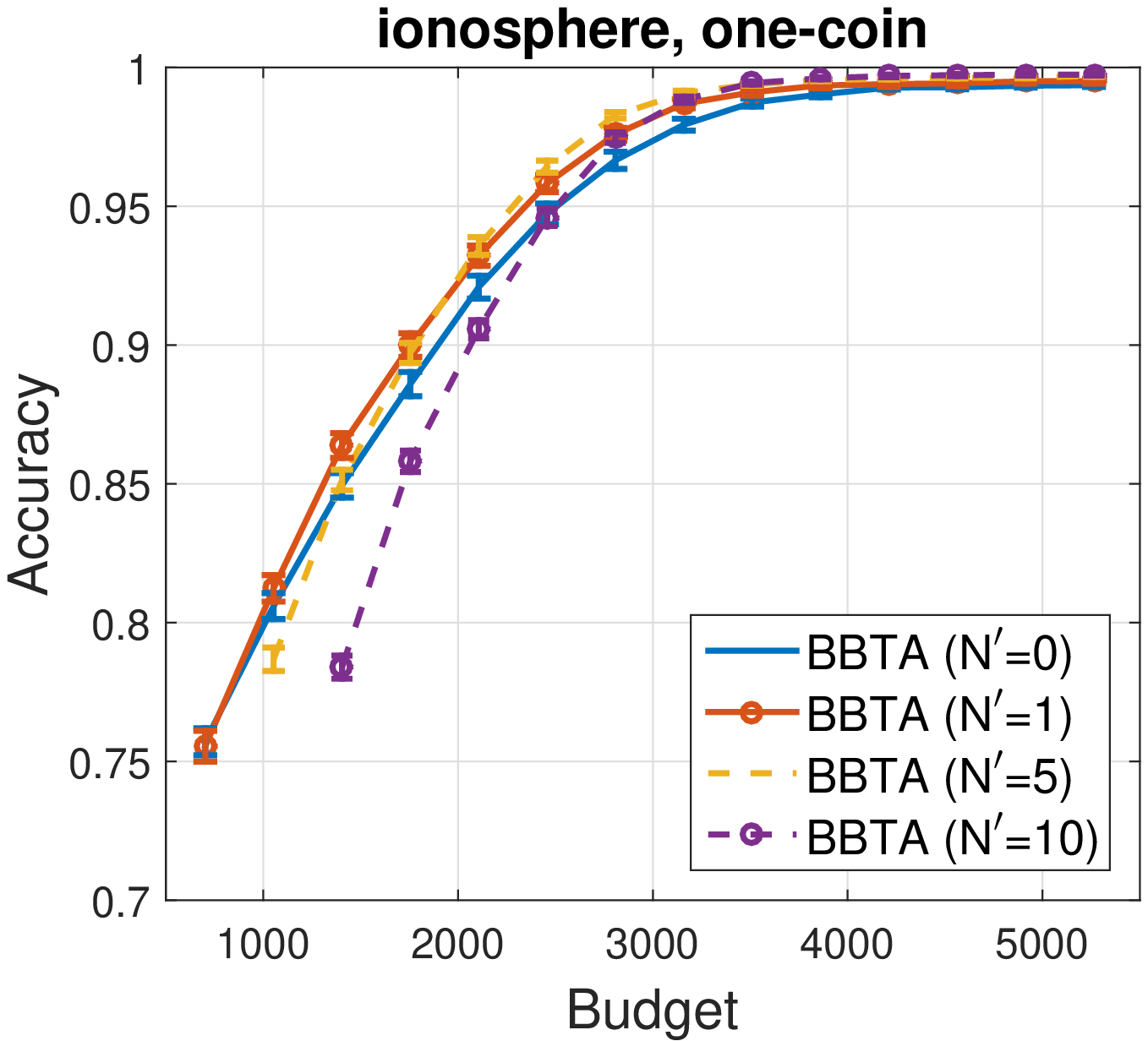}
\label{fig:ionosphere-onecoin-nprime}
}
\subfigure[$N=569,K=40,S=4$]{
\includegraphics[width=0.31\textwidth]{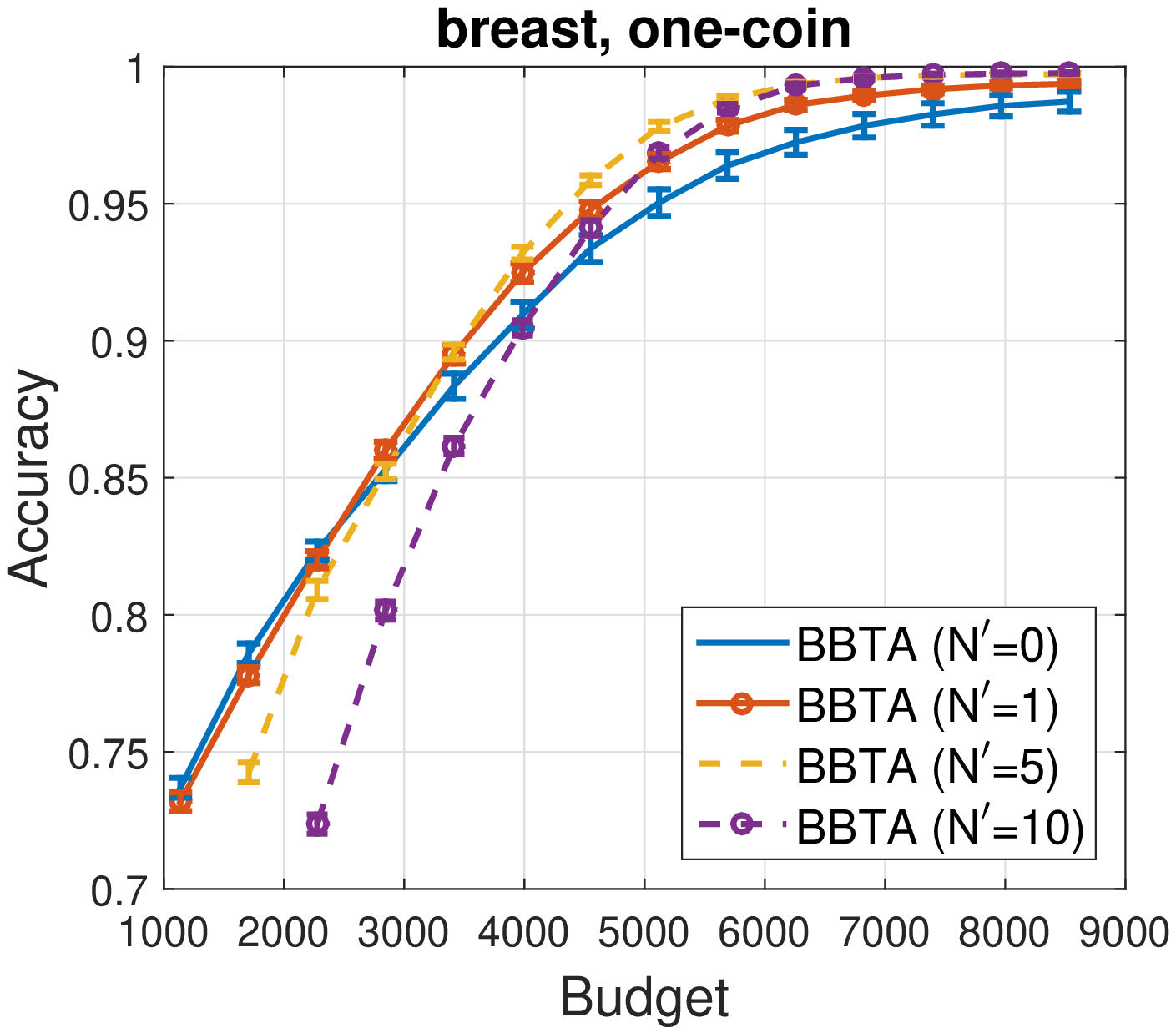}
\label{fig:breast-onecoin-nprime}
}
\subfigure[$N=768,K=50,S=5$]{
\includegraphics[width=0.31\textwidth]{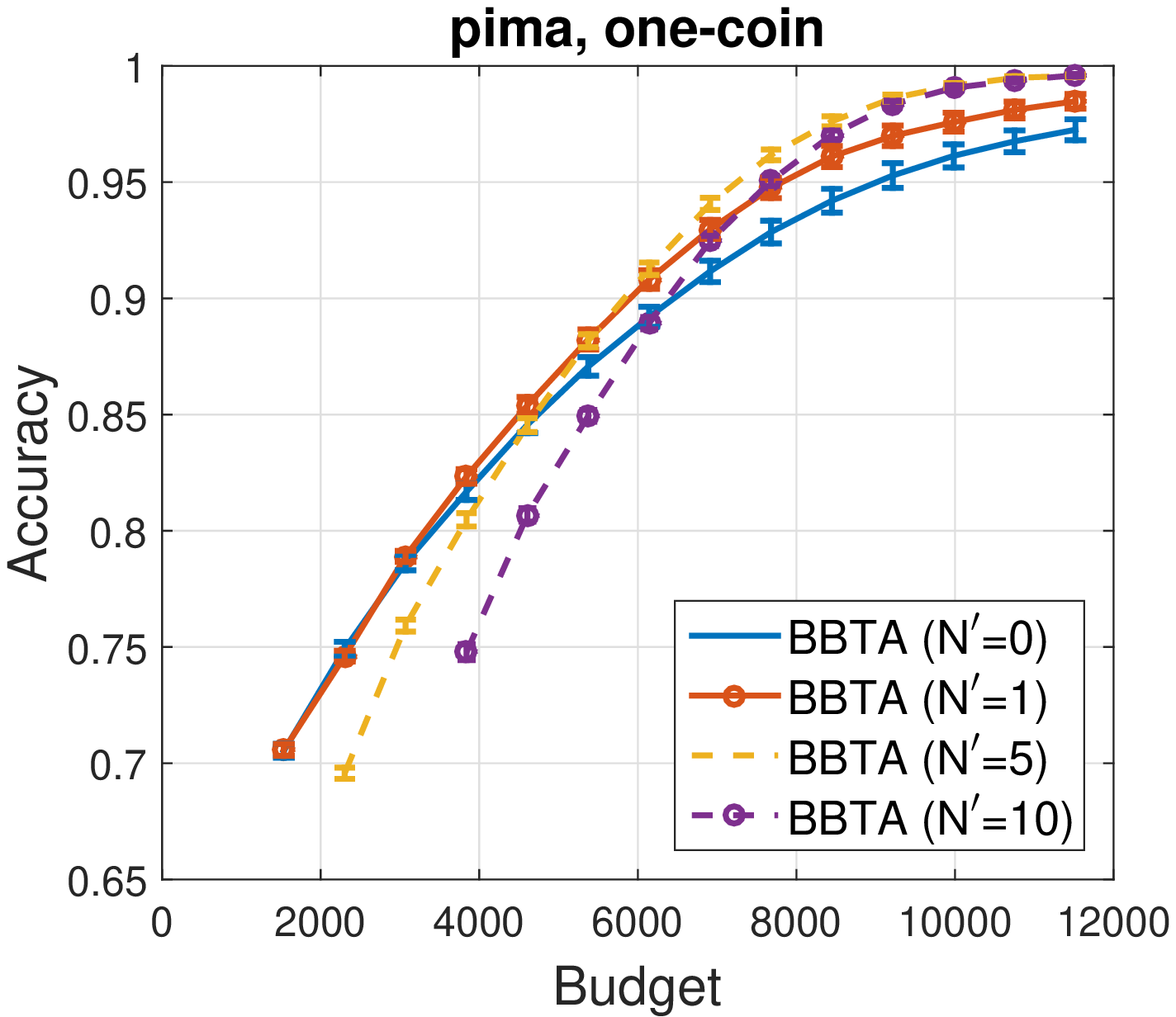}
\label{fig:pima-onecoin-nprime}
}
\subfigure[$N=351,K=30,S=3$]{
\includegraphics[width=0.31\textwidth]{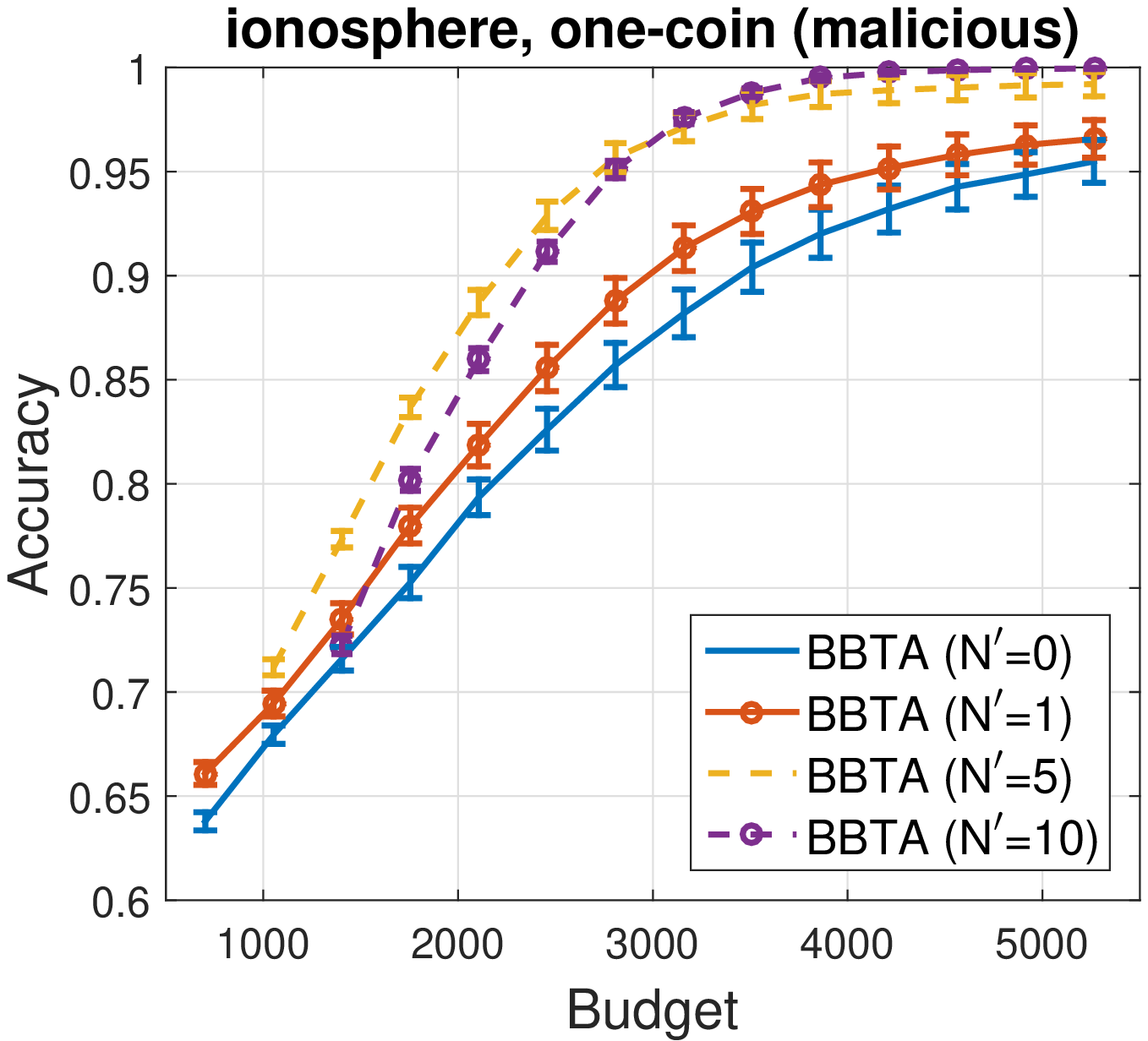}
\label{fig:ionosphere-onecoin_m-nprime}
}
\subfigure[$N=569,K=40,S=4$]{
\includegraphics[width=0.31\textwidth]{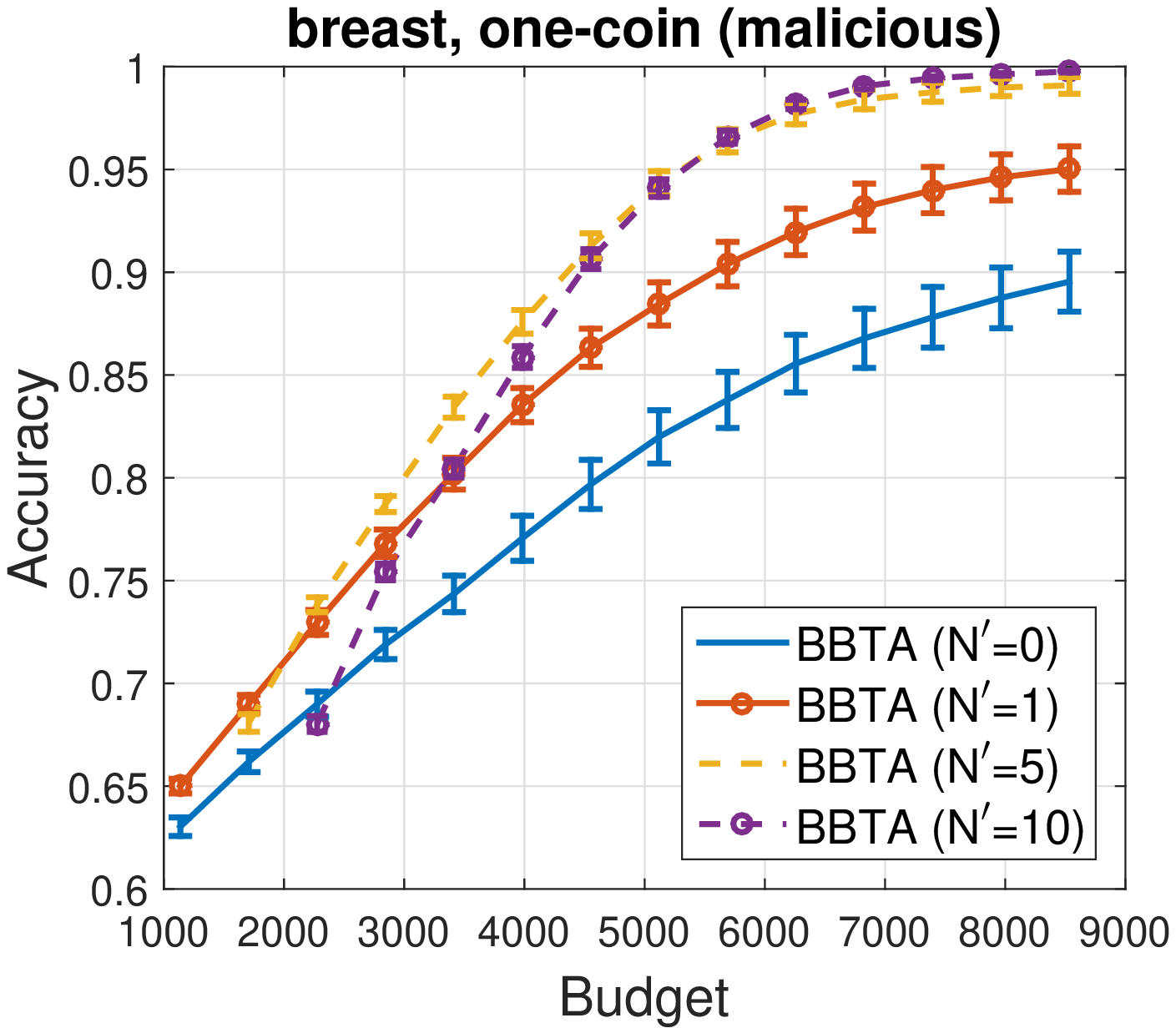}
\label{fig:breast-onecoin_m-nprime}
}
\subfigure[$N=768,K=50,S=5$]{
\includegraphics[width=0.31\textwidth]{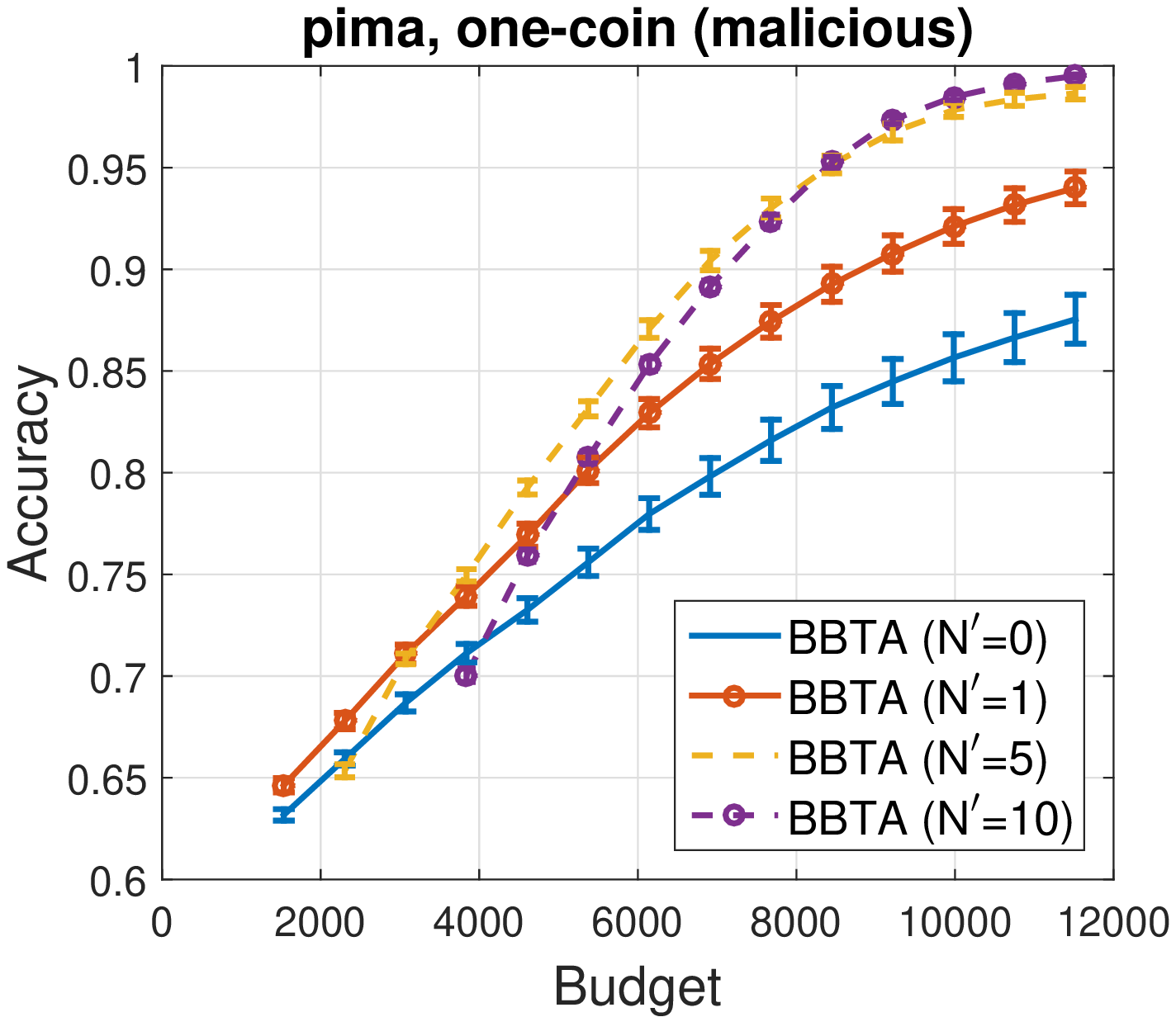}
\label{fig:pima-onecoin_m-nprime}
}
\caption{Results of changing $N'$ on three benchmark datasets with three worker models.}
\label{fig:bench-nprime}
\end{figure*}

We perform experiments on three popular UCI benchmark datasets\footnote{http://archive.ics.uci.edu/ml/}: ionosphere ($N=351$), breast ($N=569$), and pima ($N=768$). We consider instances in these datasets as labeling tasks in crowdsourcing. True labels of all tasks in these datasets are available. To simulate various heterogeneous cases in the real world, we first use \emph{k-means} to cluster these three datasets into $S=3,4,5$ subsets respectively (corresponding to different contexts). Since there are no crowd workers in these datasets, we then simulate workers ($K=30,40,50$, respectively) by using the following worker models in heterogeneous setting:
\begin{description}
\item[Spammer-Hammer Model]\hfill\\
A hammer gives true labels, while a spammer gives random labels \citep{karger11}. We introduce this model into heterogeneous setting: each worker is a hammer on one subset of tasks but a spammer on others.
\item[One-Coin Model]\hfill\\
Each worker gives true labels with a given probability (i.e. accuracy). This model is widely used in many existing crowdsourcing literatures (e.g. \citealp{raykar10,chen13}) for simulating workers. We use this model in heterogeneous setting: each worker gives true labels with higher accuracy (we set it to 0.9) on one subset of tasks, but with lower accuracy (we set it to 0.6) on others. 
\item[One-Coin Model (Malicious)]\hfill\\
This model is based on the previous one, except that we add more malicious labels: each worker is good at one subset of tasks (accuracy: 0.9), malicious or bad at another one (accuracy: 0.3), and normal at the rest (accuracy: 0.6).
\end{description}

With the generated labels from simulated workers, we can calculate the true accuracy for each worker by checking the consistency with the true labels. Figure \ref{fig:bench-workers} illustrates the counts of simulated workers with the true accuracy falling in the associated interval (e.g., 0.65 represents that the true accuracy is between 60\% and 65\%). It is easy to see that the spammer-hammer model and the one-coin model (malicious) create more adversarial environments than the one-coin model.

We compare BBTA with three state-of-the-art task assignment methods: IEThresh, CrowdSense and OptKG in terms of accuracy. Accuracy is calculated as the proportion of correct estimates for true labels. Also, since the budget is limited, we expect a task assignment method can achieve its highest accuracy as fast as possible as the budget increases (high convergence speed). We set $N'=0\text{ and }1$ for BBTA to see the effectiveness of the pure exploration phase. We also implement a naive baseline for comparison: we randomly select a task-worker pair at each step, and use majority voting mechanism for label aggregation. Accuracy of all methods is compared at different levels of budget. For BBTA, OptKG, and the naive baseline, we set the maximum amount of budget at $T=15N$. Since the budget is not pre-fixed in IEThresh and CrowdSense, we carefully select the threshold parameters for them, which affect the consumed budgets. Additionally, we also try to introduce the state-of-the-art methods (designed for the homogeneous setting) into the heterogeneous setting. Specifically, we split the total budget and allocate a sub-budget to a context in proportion to the number of tasks with this context. In particular, for context $s$, we allocate the sub-budget $T\cdot|\text{tasks with context }s|/N$. Then we can run an instance of a homogeneous method for each context within the associated sub-budget. Since OptKG has the most similar problem setting to that of the proposed method, it is straightforward to run multiple instances of OptKG with a pre-fixed sub-budget for each context. However, for the two heuristic methods IEThresh and CrowdSense, it is difficult to figure out how to use them in this way, since the budget could not be pre-determined in their settings. 

Figure \ref{fig:bench-acc} shows the averages and standard errors of accuracy as functions of budgets for all methods in nine cases (i.e. three datasets with three worker models). As we can see, BBTA with $N^\prime=1$ works better than that with $N^\prime=0$, indicating that the pure exploration phase helps in improving the performance. It is also shown that BBTA ($N^\prime=1$) outperforms other methods in all six cases with the spammer-hammer model and the one-coin model (malicious). This demonstrates that BBTA can handle spamming or malicious labels better than others in more adversarial heterogeneous environments. For the three cases with the one-coin model where there are more reliable labels, almost all methods have good performances. Nevertheless, IEThresh performs poorly in all cases, even worse than the naive baseline. The reason is that IEThresh samples a subset of reliable workers at each step, by calculating upper confidence intervals of workers based on their labeling performances on previous tasks. In heterogeneous settings, however, a worker reliable at previous tasks may be poor at next ones. This makes IEThresh learn workers' reliability incorrectly, resulting in poor sampling of worker subsets. Although CrowdSense also adopts the mechanism of dynamically sampling worker subsets, its exploration-exploitation criterion gives it a chance of randomly selecting some workers who may be reliable at next tasks. For OptKG, not surprisingly, OptKG (Multi.) which is aware of contexts outperforms the original OptKG. The tendency of either of them implies that they may achieve the best accuracy as the budget goes to infinity, but the convergence speed is shown to be slower than those of BBTA and CrowdSense. In crowdsourcing, it is important to achieve the best accuracy as fast as possible, especially when the budget is limited. 

We then run BBTA with changing $N'$, to see how $N'$ affects the performance. We set $N'= 0,1,5,10$, and the results are shown in Figure~\ref{fig:bench-nprime}. It can be seen that without the pure exploration phase (i.e. $N'=0$), the performance is the worst in all nine cases. On the other hand, when we add the pure exploration phase ($N'>0$), the performance is improved. However, we are unable to conclude that the larger $N'$ is, the better the performance is (e.g. $N'=10$ does not always make the method achieve its highest accuracy fastest in all nine cases). Indeed, a larger $N'$ means a longer pure exploration phase, which consumes a larger proportion of the total budget. For example, when $N'=10$, the performance usually starts from a lower accuracy level than that when we choose other exploration lengths. Although its start level is lower, as the budget increases, the performance when $N'=10$ can outperform all the others in most cases of the spammer-hammer and one-coin (malicious) models. However, it can only achieve the same level as the performance when $N'=5$ in all cases of the one-coin model, but with a lower convergence speed. In BBTA, we can only choose $N'$ to affect the performance, and there are also some other factors such as the true reliability of workers and how different their labeling performances are from each others, of which we usually do not have prior knowledge in real-world crowdsourcing. If we could somehow know beforehand that the worker pool is complex (in terms of the difference of workers' reliability) as in the spammer-hammer and one-coin (malicious) models, setting a larger $N'$ may help, otherwise a practical choice would be to set a small $N'$.

\subsection{Real Data}

Next, we compare BBTA with the existing task assignment methods on two real-world datasets.

\subsubsection{Recognizing Textual Entailment}

We first use a real dataset from \emph{recognizing textual entailment} (RTE) tasks in natural language processing. This dataset is collected by \citet{snow08} using \emph{Amazon Mechanical Turk} (MTurk). For each RTE task in this dataset, the worker is presented with two sentences and given a binary choice of whether the second sentence can be inferred from the first one. The true labels of all tasks are available and used for evaluating the performances of all task assignment methods in our experiments.

In this dataset, there is no context information available, or we can consider all tasks have the same context. That is, this is a homogeneous dataset ($S=1$). The numbers of tasks and workers in this dataset are $N=800$ and $K=164$ respectively. Since the originally collected label set is not complete (i.e. not every worker gives a label for each task), we decided to use a matrix completion algorithm\footnote{We use \emph{GROUSE} \citep{balzano10} for label matrix completion in our experiments.} to fill the incomplete label matrix, to make sure that we can collect a label when any task is assigned to any worker in the experiments. Then we calculate the true accuracy of workers for this dataset, as illustrated in Figure~\ref{fig:rte-workers}.

\begin{figure*}[t]
\centering
\subfigure[$N=800,K=164,S=1$]{
\includegraphics[width=0.47\textwidth]{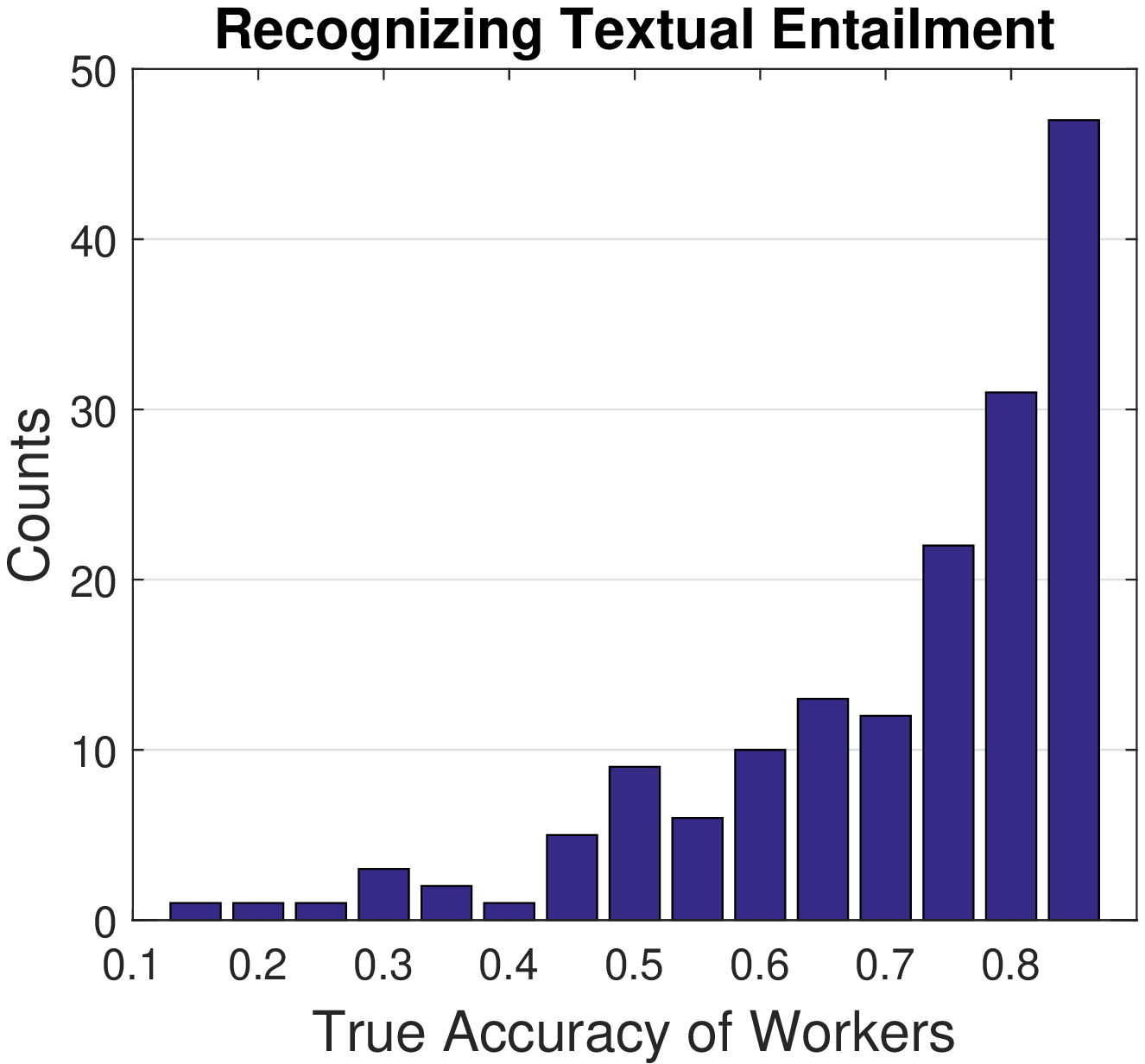}
\label{fig:rte-workers}
}
\subfigure[$N=204,K=42,S=2$]{
\includegraphics[width=0.47\textwidth]{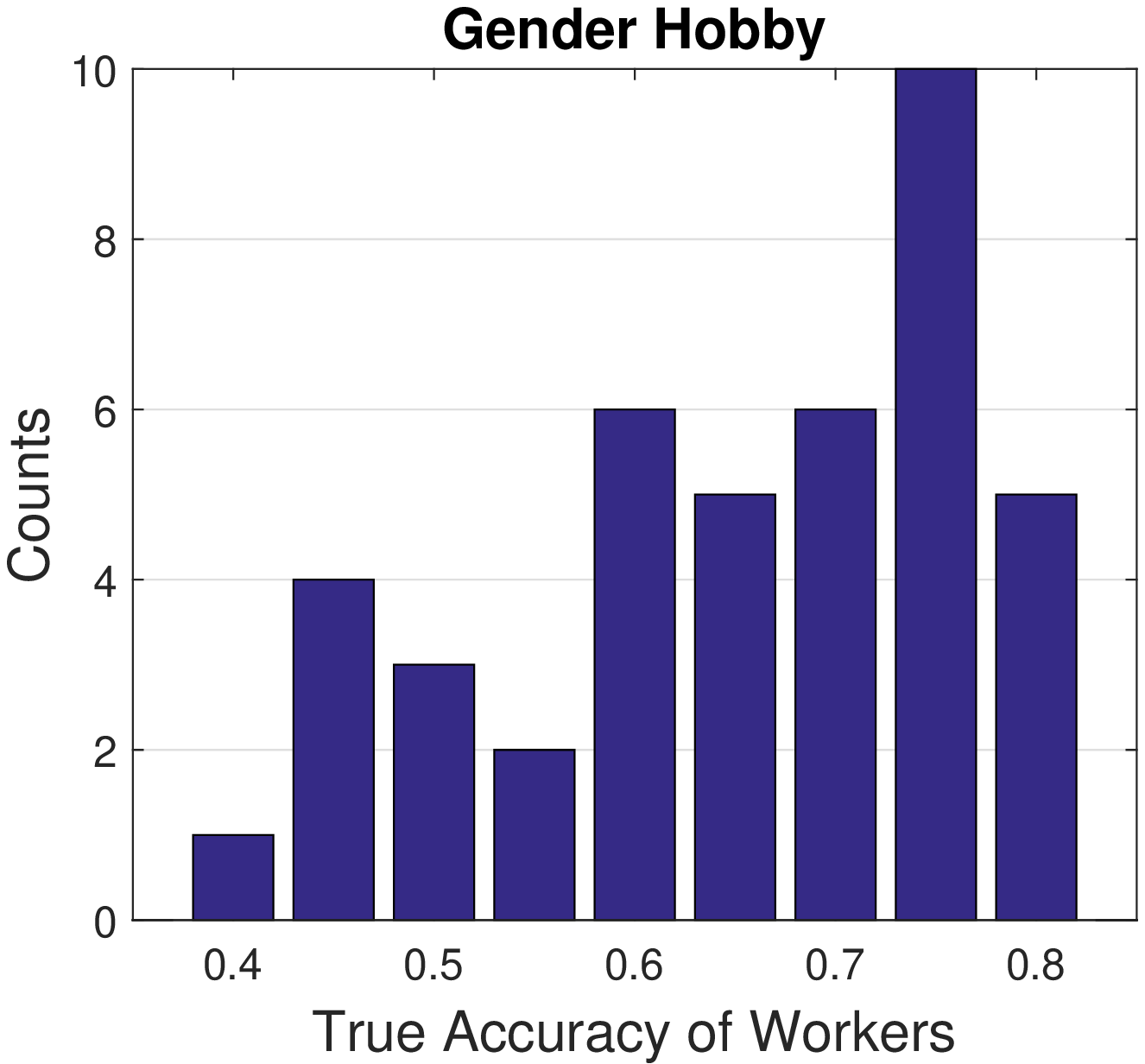}
\label{fig:gh-workers}
}
\caption{Distribution of true accuracy of workers for two real-world datasets.}
\label{fig:real-workers}
\end{figure*}

\begin{figure*}[t]
\centering
\subfigure[$N=800,K=164,S=1$]{
\includegraphics[width=0.47\textwidth]{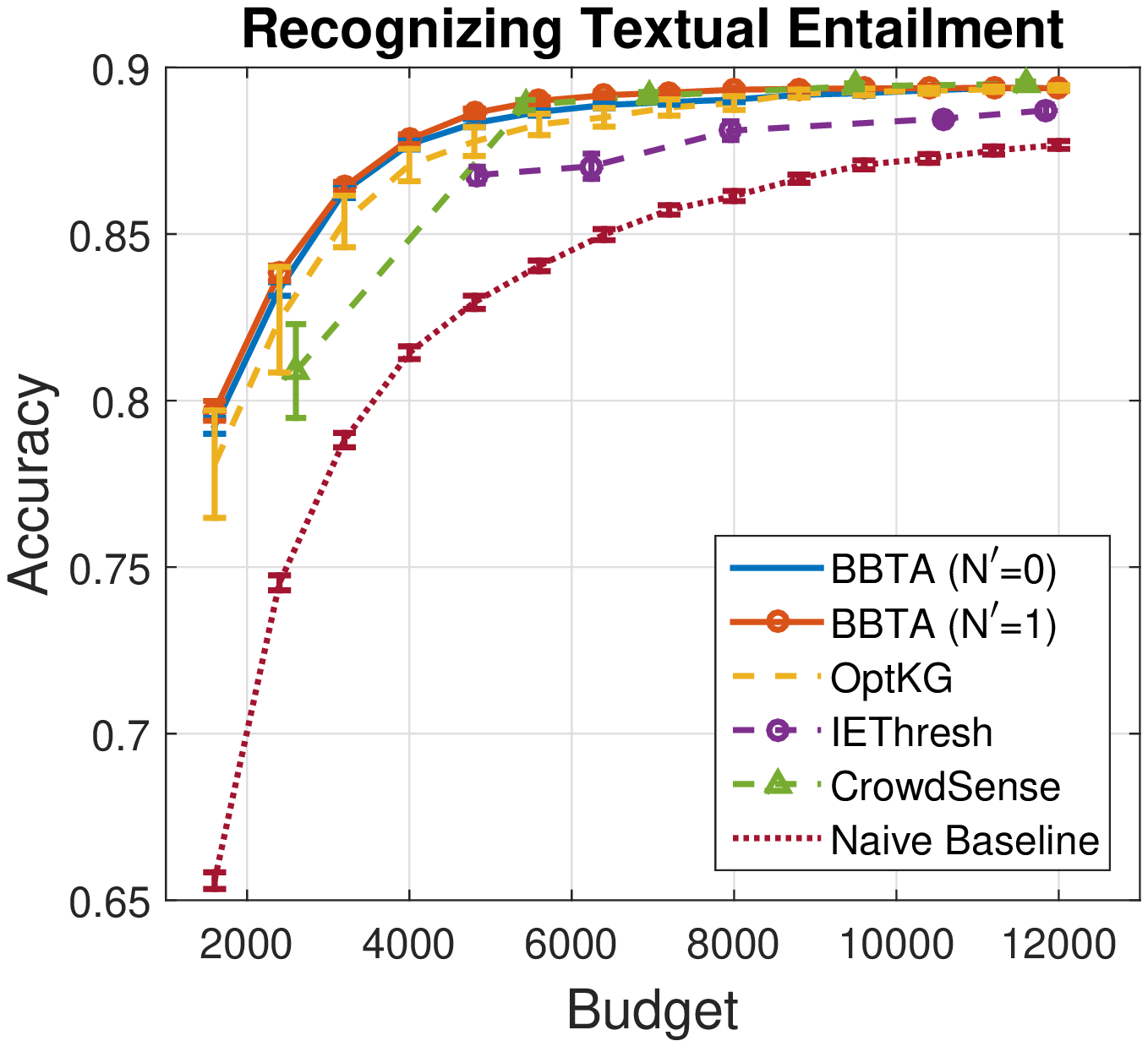}
\label{fig:rte-acc}
}
\subfigure[$N=204,K=42,S=2$]{
\includegraphics[width=0.47\textwidth]{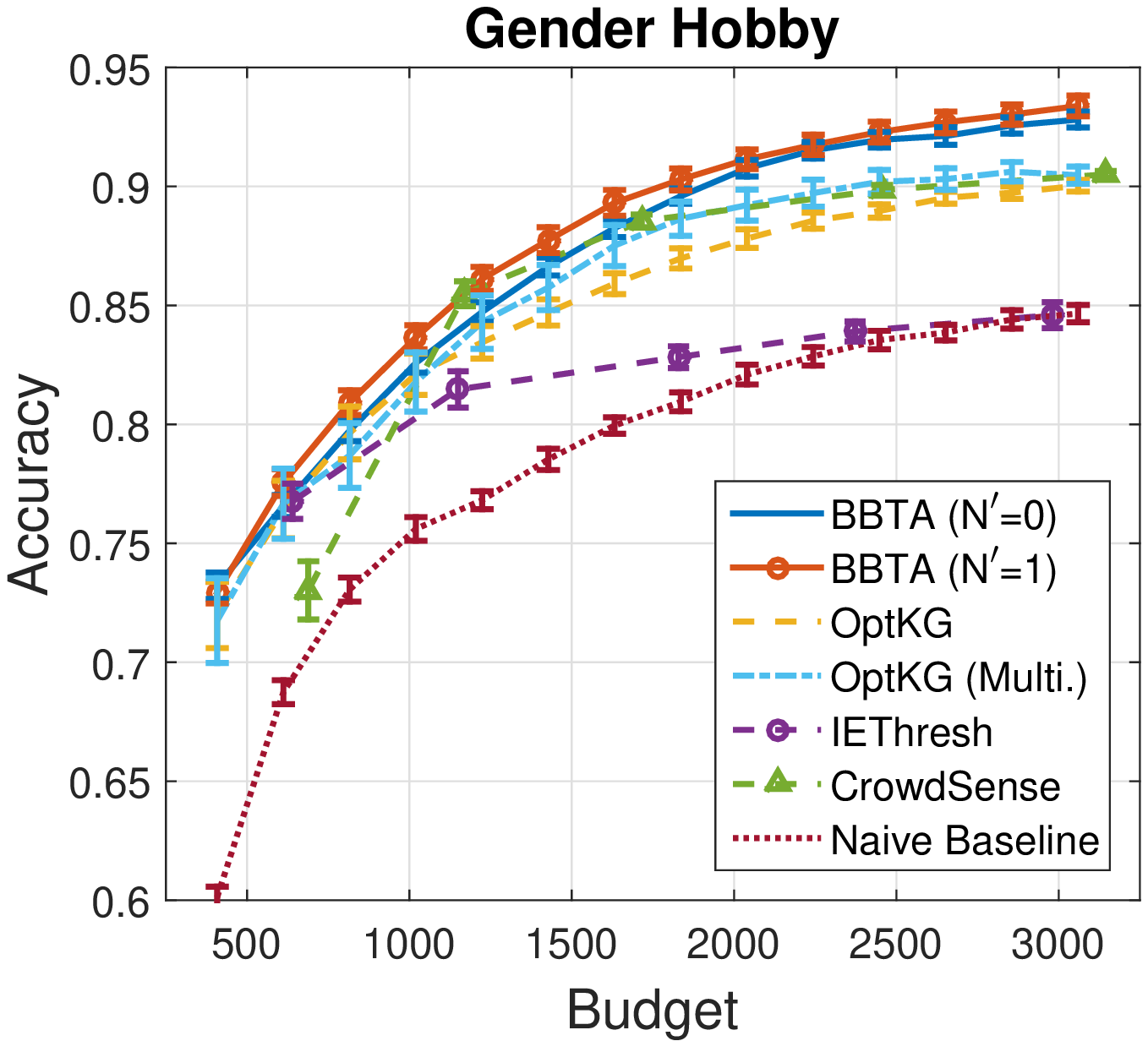}
\label{fig:gh-acc}
}
\caption{Comparison results on two real-world datasets.}
\label{fig:real-acc}
\end{figure*}

Figure~\ref{fig:rte-acc} depicts the comparison results on the RTE data, showing that all methods work very well. The reason is that there is a significant proportion of reliable workers in this dataset as we can see in Figure~\ref{fig:rte-workers}, and finding them out is not a difficult mission for all methods. It is also shown in Figure~\ref{fig:rte-acc} that BBTA with $N'=1$ converges to the highest accuracy slightly faster than others. This is important in practice especially in the budget-sensitive setting, because achieving higher accuracy within a lower budget is always favored from the perspective of requesters in crowdsourcing.

\subsubsection{Gender Hobby Dataset}

The second real dataset we use is \emph{Gender Hobby} (GH) collected from MTurk by \citet{mo13}. Tasks in this dataset are binary questions that are explicitly divided into two contexts ($S=2$): sports and makeup-cooking. This is a typical heterogeneous dataset, where there are $N=204$ tasks ($102$ per context) and $K=42$ workers. Since the label matrix in the original GH data is also incomplete, we use the matrix completion algorithm again to fill the missing entries. Figure~\ref{fig:gh-workers} illustrates the distribution of the true accuracy of workers in this dataset. It is easy to see that the labels given by the workers in this dataset are more malicious than those in the RTE data (Figure~\ref{fig:rte-workers}) due to the increased diversity of tasks.

Figure~\ref{fig:gh-acc} plots the experimental results, showing that BBTA with $N'=0$ and $N'=1$ outperform others on this typical heterogeneous dataset.

\section{Conclusions and Future Work}
\label{sec:conclusion}

In this paper, we proposed a contextual bandit formulation to address the problem of task assignment in heterogeneous crowdsourcing. In the proposed method called the bandit-based task assignment (BBTA), we first explored workers' reliability and then attempted to adaptively assign tasks to appropriate workers.

We used the exponential weighting scheme to handle the exploration-exploitation trade-off in worker selection and utilized the weighted voting mechanism to aggregate the collected labels. Thanks to the contextual formulation, BBTA models the task-dependent reliability for workers, and thus is able to intelligently match workers to tasks they can do best. This is a significant advantage over the state-of-the-art task assignment methods. We experimentally showed the usability of BBTA in heterogeneous crowdsourcing tasks.

We also theoretically investigated the regret bounds for BBTA. In the regret analysis, we showed the performance of our strategy converges to that of the optimal one as the budget goes to infinity.

Heterogeneity is practical and important in recent real-world crowdsoucing systems. There is still a lot of room for further work in heterogeneous setting. In particular, we consider four possible directions:
\begin{itemize}
\item In practical crowdsourcing, categorical labeling tasks with more than two classes are also common besides binary ones. Then we can extend the current method to the categorical setting, where the weighted voting mechanism turns to 
\begin{align*}
\widehat{y}_i=\argmax_{c\in\cC}\sum_{j=1}^{K}w^s_{j}\cdot\dOne_{y_{i,j}=c},\text{ where } \cC \text{ is the categorical label space.}
\end{align*}
\item The similarities among tasks with different types (corresponding to the information sharing among different contexts) can also be considered, since real-world workers may have similar behaviors on different types of tasks.
\item To further improve the reliability of the crowdsourcing system, we can involve the supervision of domain experts, and adaptively collect labels from both workers and experts. In such a scenario, we need to cope with the balance between the system reliability and the total budget, since expert labels (considered as the ground truth) are much more expensive than worker labels.
\item From a theoretical point of view, the context and loss could be considered to be dependent on the history of actions, although reasonably modeling this dependence relation is challenging in the setting of heterogeneous crowdsourcing. The current method considered the sequence of contexts and losses as external feedback, and thus adopted a standard bandit formulation. If we could somehow appropriately capture the dependence relation mentioned above, it would be possible to further improve the current theoretical results.
\end{itemize}

In our future work, we will further investigate the challenging problems in heterogeneous crowdsourcing.

\section*{Acknowledgments}
HZ and YM were supported by the MEXT scholarship and the CREST program. MS was supported by MEXT KAKENHI 25700022.

\appendix
\section*{Appendix}

\section{Proof of Lemma \ref{lm}}
This proof is following the framework of regret analysis for adversarial bandits by \citet{bubeck10}.

Recall that the unbiased estimator of loss is
\begin{align*}
\widetilde{l}_{j,t}=\frac{l_{j,t}}{p_{j,t}}\dOne_{j=j_t}.
\end{align*}
Then we have
\begin{align*}
&\bE_{j\sim p_t}\left[\widetilde{l}_{j,t}\right]=\sum_{j=1}^{K}\left(p_{j,t}\frac{l_{j,t}}{p_{j,t}}\dOne_{j=j_t}\right)=l_{j_t,t},\\
&\bE_{j_t\sim p_t}\left[\widetilde{l}_{j',t}\right]=\sum_{j=1}^{K}\left(p_{j,t}\frac{l_{j',t}}{p_{j',t}}\dOne_{j'=j}\right)=l_{j',t},
\end{align*}
which imply
\begin{align}
\label{eq:diff-loss}
\sum_{t=1}^{T_2}l_{j_t,t}-\sum_{t=1}^{T_2}l_{j',t}=\sum_{t=1}^{T_2}\bE_{j\sim p_t}\left[\widetilde{l}_{j,t}\right]-\sum_{t=1}^{T_2}\bE_{j_t\sim p_t}\left[\widetilde{l}_{j',t}\right].
\end{align}

Now we rewrite $\bE_{j\sim p_t}\left[\widetilde{l}_{j,t}\right]$ in the first term of \eqref{eq:diff-loss} as 
\begin{align}
\label{eq:rewrite}
\bE_{j\sim p_t}\left[\widetilde{l}_{j,t}\right]=\frac{1}{\eta_t}\ln\bE_{j\sim p_t}\left[\exp\left(-\eta_t\left(\widetilde{l}_{j,t}-\bE_{j'\sim p_t}\left[\widetilde{l}_{j',t}\right]\right)\right)\right]-\frac{1}{\eta_t}\ln\bE_{j\sim p_t}\left[\exp\left(-\eta_t\widetilde{l}_{j,t}\right)\right].
\end{align}
Then the first term of \eqref{eq:rewrite} is
\begin{align}
\label{eq:1st-term-rewrite}
\frac{1}{\eta_t}\ln\bE_{j\sim p_t}\left[\exp\left(-\eta_t\left(\widetilde{l}_{j,t}-\bE_{j'\sim p_t}\left[\widetilde{l}_{j',t}\right]\right)\right)\right]
&=\frac{1}{\eta_t}\ln\bE_{j\sim p_t}\left[\exp\left(-\eta_t\widetilde{l}_{j,t}\right)\right]+\bE_{j'\sim p_t}\left[\widetilde{l}_{j',t}\right]\notag\\
&\leq\frac{1}{\eta_t}\bE_{j\sim p_t}\left[\exp\left(-\eta_t\widetilde{l}_{j,t}\right)-1\right]+l_{j_t,t}\notag\\
&=\frac{1}{\eta_t}\bE_{j\sim p_t}\left[\exp\left(-\eta_t\widetilde{l}_{j,t}\right)-1+\eta_t\widetilde{l}_{j,t}\right]\notag\\
&\leq\frac{1}{\eta_t}\bE_{j\sim p_t}\left[\frac{\eta_t^2\widetilde{l}_{j,t}^2}{2}\right]\notag\\
&\leq\frac{\eta_t}{2p_{j_t,t}},
\end{align}
where the first and second inequalities are due to $\ln x\leq x-1$ for $x>0$ and $\exp(x)\leq 1+x+x^2/2$ for $x\leq 0$ respectively, and the third inequality is because
\begin{align*}
\bE_{j\sim p_t}\left[\widetilde{l}^2_{j,t}\right]=\sum_{j=1}^{K}p_{j,t}\left(\frac{l_{j,t}}{p_{j,t}}\dOne_{j=j_t}\right)^2=\frac{l^2_{j_t,t}}{p_{j_t,t}}\leq\frac{1}{p_{j_t,t}}.
\end{align*}
We define the potential function as
\begin{align*}
\Phi_t(\eta)=\frac{1}{\eta}\ln\frac{1}{K}\sum_{j=1}^{K}\exp\left(-\eta L_{j,t}\right).
\end{align*}
Then the second term of \eqref{eq:rewrite} is
\begin{align}
\label{eq:2nd-term-rewrite}
-\frac{1}{\eta_t}\ln\bE_{j\sim p_t}\left[\exp\left(-\eta_t\widetilde{l}_{j,t}\right)\right]&=-\frac{1}{\eta_t}\ln\sum_{j=1}^{K} p_{j,t}\exp\left(-\eta_t\widetilde{l}_{j,t}\right)\notag\\
&=-\frac{1}{\eta_t}\ln\sum_{j=1}^{K}\frac{w_{j,t}}{\sum_{j'=1}^{K}w_{j',t}}\exp\left(-\eta_t\widetilde{l}_{j,t}\right)\notag\\
&=-\frac{1}{\eta_t}\ln\frac{\sum_{j=1}^{K}\exp\left(-\eta_t L_{j,t-1}-\eta_t\widetilde{l}_{j,t}\right)}{\sum_{j'=1}^{K}\exp\left(-\eta_t L_{j',t-1}\right)}\notag\\
&=-\frac{1}{\eta_t}\ln\frac{\sum_{j=1}^{K}\exp\left(-\eta_t L_{j,t}\right)}{\sum_{j'=1}^{K}\exp\left(-\eta_t L_{j',t-1}\right)}\notag\\
&=\Phi_{t-1}(\eta_t)-\Phi_t(\eta_t).
\end{align}

Combining \eqref{eq:diff-loss}, \eqref{eq:rewrite}, \eqref{eq:1st-term-rewrite}, and \eqref{eq:2nd-term-rewrite}, we obtain

\begin{align}
\label{eq:diff-loss-bound}
\sum_{t=1}^{T_2}l_{j_t,t}-\sum_{t=1}^{T_2}l_{j',t}\leq\sum_{t=1}^{T_2}\frac{\eta_t}{2p_{j_t,t}}+\sum_{t=1}^{T_2}\left(\Phi_{t-1}(\eta_t)-\Phi_t(\eta_t)\right)-\sum_{t=1}^{T_2}\bE_{j_t\sim p_t}\left[\widetilde{l}_{j',t}\right].
\end{align}
Since we have
\begin{align*}
\bE_{j_t\sim p_t}\left[\frac{1}{p_{j_t,t}}\right]=\sum_{j=1}^{K}p_{j,t}\frac{1}{p_{j,t}}=K,
\end{align*}
then the expectation of the first term of \eqref{eq:diff-loss-bound} is
\begin{align}
\label{eq:1st-term-diff-loss-bound}
\bE\left[\sum_{t=1}^{T_2}\frac{\eta_t}{2p_{j_t,t}}\right]=\frac{1}{2}\bE\left[\sum_{t=1}^{T_2}\bE_{j_t\sim p_t}\left[\frac{\eta_t}{p_{j_t,t}}\right]\right]=\frac{K}{2}\sum_{t=1}^{T_2}\eta_t.
\end{align}
The second term of \eqref{eq:diff-loss-bound} is transformed into
\begin{align}
\label{eq:2nd-term-diff-loss-bound}
\sum_{t=1}^{T_2}\left(\Phi_{t-1}(\eta_t)-\Phi_t(\eta_t)\right)=\Phi_0(\eta_1)+\sum_{t=1}^{T_2-1}\left(\Phi_{t}(\eta_{t+1})-\Phi_t(\eta_t)\right)-\Phi_{T_2}(\eta_{T_2}).
\end{align}
By the definition of the potential function, we have
\begin{align}
\label{eq:phi0}
\Phi_0(\eta_1)&=\frac{1}{\eta_1}\ln\frac{1}{K}\sum_{j=1}^{K}\exp\left(-\eta_1 L_{j,0}\right)\notag\\
&\leq\frac{1}{\eta_1}\left(\frac{1}{K}\sum_{j=1}^{K}\exp\left(-\eta_1L_{j,0}\right)-1\right)\notag\\
&\leq\frac{1}{\eta_1K}\left(\sum_{j=1}^{K}\left(\exp\left(-\eta_1L_{j,0}\right)-1+\eta_1L_{j,0}\right)\right)\notag\\
&\leq\frac{1}{\eta_1K}\sum_{j=1}^{K}\frac{\eta_1^2L_{j,0}^2}{2}\notag\\
&=\frac{\eta_1}{2K}\sum_{j=1}^{K}\left(\sum_{i\in\cI_1}\dOne_{y_{i,j}\neq \widehat{y}_i}\right)^2\notag\\
&\leq\frac{\eta_1}{2K}\sum_{j=1}^{K}\left(N'\sum_{i\in\cI_1}\left(\dOne_{y_{i,j}\neq \widehat{y}_i}\right)^2\right)\notag\\
&=\frac{\eta_1N'}{2K}\sum_{i\in\cI_1}\sum_{j=1}^{K}\left(\dOne_{y_{i,j}\neq \widehat{y}_i}\right)^2\notag\\
&\leq\frac{\eta_1N'}{2K}\sum_{i\in\cI_1}\frac{K}{2}=\frac{\eta_1{N'}^2}{4},
\end{align}
where the first and third inequalities are due to $\ln x\leq x-1$ for $x>0$ and $\exp(x)\leq 1+x+x^2/2$ for $x\leq 0$ respectively, the fifth line of \eqref{eq:phi0} is obtained by the definition of the cumulative loss $L_{j,0}$ in the pure exploration phase, the fourth inequality of \eqref{eq:phi0} is an immediate result of using Jensen's inequality and $\left|\cI_1\right|=N'$, and the last line of \eqref{eq:phi0} is because we use majority voting in the pure exploration phase. We also have
\begin{align}
\label{eq:phiT}
-\Phi_{T_2}(\eta_{T_2})&=-\frac{1}{\eta_{T_2}}\ln\frac{1}{K}\sum_{j=1}^{K}\exp\left(-\eta_{T_2} L_{j,T_2}\right)\notag\\
&\leq \frac{\ln K}{\eta_{T_2}}-\frac{1}{\eta_{T_2}}\ln\left(\exp\left(-\eta_{T_2}L_{j',T_2}\right)\right)\notag\\
&=\frac{\ln K}{\eta_{T_2}}+L_{j',0}+\sum_{t=1}^{T_2}\widetilde{l}_{j',t}\notag\\
&\leq\frac{\ln K}{\eta_{T_2}}+N'+\sum_{t=1}^{T_2}\widetilde{l}_{j',t}.
\end{align}

Now we combine \eqref{eq:diff-loss-bound}, \eqref{eq:1st-term-diff-loss-bound}, \eqref{eq:2nd-term-diff-loss-bound}, \eqref{eq:phi0}, and \eqref{eq:phiT} and obtain
\begin{align*}
\bE\left[\sum_{t=1}^{T_2}l_{j_t,t}-\sum_{t=1}^{T_2}l_{j',t}\right]\leq\frac{K}{2}\sum_{t=1}^{T_2}\eta_t+\frac{\eta_1{N'}^2}{4}+\frac{\ln K}{\eta_{T_2}}+N'+\bE\left[\sum_{t=1}^{T_2-1}\left(\Phi_{t}(\eta_{t+1})-\Phi_t(\eta_t)\right)\right].
\end{align*}
Since it is shown by \citet{bubeck10} that $\Phi'(\eta)\geq0$ and we set $\eta_{t}=\sqrt{\frac{\ln K}{tK}}$, we have
\begin{align}
\label{eq:bound}
\bE\left[\sum_{t=1}^{T_2}l_{j_t,t}-\sum_{t=1}^{T_2}l_{j',t}\right] &\leq\frac{K}{2}\sum_{t=1}^{T_2}\sqrt{\frac{\ln K}{tK}}+\frac{{N'}^2}{4}\sqrt{\frac{\ln K}{K}}+\sqrt{T_2K\ln K}+N'\notag\\
&\leq 2\sqrt{(T-KN')K\ln K}+\frac{{N'}^2}{4}\sqrt{\frac{\ln K}{K}}+N',
\end{align}
where the second inequality is because $T_2=T-KN'$ and
\begin{align*}
\sum_{t=1}^{T_2}\frac{1}{\sqrt{t}}\leq 2\sqrt{T_2}.
\end{align*}
\eqref{eq:bound} shows the bound about our strategy competing against \emph{any} strategy that always selects one single worker, thus concluding the proof.
\bibliographystyle{apa}
\bibliography{fullbib}

\end{document}